\newcommand{\data}{\mathcal{D}}
\newcommand{\real}{\mathbb{R}}
\newcommand{\params}{\bm{\theta}}
\newcommand{\mean}{\bm{\mu}}
\newcommand{\cov}{\bm{\Sigma}}
\newcommand{\precs}{\mathbf{\Lambda}}
\newcommand{\latent}{\mathbf{z}}
\newcommand{\inputs}{\mathbf{x}}
\newcommand{\targets}{\mathbf{y}}
\newcommand{\inputMat}{\mathbf{X}}
\newcommand{\iden}{\mathbf{I}}
\newcommand{\zero}{\mathbf{0}}
\newcommand{\mom}{\mathbf{v}}
\newcommand{\elbo}{\mathcal{L}}
\newcommand{\expect}{\mathbb{E}}
\newcommand{\normal}{\mathcal{N}}
\newcommand{\model}{\mathcal{M}}
\newcommand{\mass}{\mathbf{M}}
\newcommand{\noise}{{\bm{\varepsilon}}}
\newcommand{\trans}{\mathcal{T}}
\newcommand{\partition}{\mathcal{Z}}
\newcommand{\kldiv}{\mathrm{D}_{\mathrm{KL}}}
\newcommand{\klbars}{\,\|\,}
\newcommand{\trace}{\mathrm{Tr}}
\newcommand{\bigo}{\mathcal{O}}
\newcommand{\method}{DAIS}
\newcommand*\circled[1]{\tikz[baseline=(char.base)]{
   \node[shape=circle,draw,inner sep=0.5pt] (char) {#1};}}
\DeclareMathOperator*{\dprime}{\prime \prime}
\title{Differentiable Annealed Importance Sampling \\ and the Perils of Gradient Noise}
\author{%
  Guodong Zhang${}^{1, 2}$, Kyle Hsu${}^{3}$, Jianing Li${}^{1}$, Chelsea Finn${}^{3}$, Roger Grosse${}^{1, 2}$ \\ ${}^{1}$University of Toronto, ${}^{2}$Vector Institute, ${}^{3}$Stanford University \\ 
	\texttt{\{gdzhang, rgrosse\}@cs.toronto.edu}\\ \texttt{\{kylehsu, cbfinn\}@cs.stanford.edu}, \texttt{jrobert.li@mail.utoronto.ca}
}
\begin{document}

\maketitle

\begin{abstract}
Annealed importance sampling (AIS) and related algorithms are highly effective tools for marginal likelihood estimation, but are not fully differentiable due to the use of Metropolis-Hastings correction steps. Differentiability is a desirable property as it would admit the possibility of optimizing marginal likelihood as an objective using gradient-based methods. 
To this end, we propose Differentiable AIS (DAIS), a variant of AIS which ensures differentiability by abandoning the Metropolis-Hastings corrections. As a further advantage, DAIS allows for mini-batch gradients.
We provide a detailed convergence analysis for Bayesian linear regression which goes beyond previous analyses by explicitly accounting for the sampler not having reached equilibrium. 
Using this analysis, we prove that DAIS is consistent in the full-batch setting and provide a sublinear convergence rate. Furthermore, motivated by the problem of learning from large-scale datasets, we study a stochastic variant of DAIS that uses mini-batch gradients. Surprisingly, stochastic DAIS can be arbitrarily bad due to a fundamental incompatibility between the goals of last-iterate convergence to the posterior and elimination of the accumulated stochastic error. This is in stark contrast with other settings such as gradient-based optimization and Langevin dynamics, where the effect of gradient noise can be washed out by taking smaller steps. This indicates that annealing-based marginal likelihood estimation with stochastic gradients may require new ideas.

\end{abstract}

\section{Introduction}

Marginal likelihood (ML), sometimes called \emph{evidence}, is a central quantity in Bayesian learning as it measures how well a model can describe a particular dataset. 
It is commonly used to select hyperparameters for Gaussian processes ~\citep{rasmussen2003gaussian},
where either closed-form solutions or accurate, tractable approximations are available. 
However, it is more often the case that computing ML is computationally intractable, as it involves summation or integration over high-dimensional model parameters or latent variables. In this case, one must resort to numerical methods or other approximations~\citep{kass1995bayes}. In the context of model comparison (e.g., evaluating generative models~\citep{wu2016quantitative, huang2020evaluating}), annealed importance sampling (AIS)~\citep{neal2001annealed} is one of the most popular and effective algorithms. Notably, AIS is closely related to other generic ML estimators that yield accurate estimation~\citep{grosse2015sandwiching}, including Sequential Monte Carlo (SMC)~\citep{doucet2001introduction} and nested sampling~\citep{skilling2006nested}. 
Under some assumptions, AIS is able to produce accurate estimates of marginal likelihood given enough computation time (it converges to the true ML value quickly by adding more intermediate distributions). %

AIS alternates between Markov chain Monte Carlo (MCMC) transitions and importance sampling updates, where the MCMC step typically involves a non-differentiable Metropolis-Hastings (MH) correction. 
Unfortunately, the non-differentiability precludes gradient-based optimization of the sampler and complicates theoretical analysis.
To deal with this, we marry AIS with Hamiltonian Monte Carlo (HMC)~\citep{neal2011mcmc} and derive an unbiased yet differentiable ML estimator named differentiable AIS (DAIS) by removing the MH correction step, which further unlocks the possibility of mini-batch computation. Moreover, DAIS can be made memory efficient by caching noise and simulating Hamiltonian dynamics in reverse~\citep{maclaurin2015gradient}.
We analyze the convergence of DAIS in the setting of Bayesian linear regression. Our analysis goes beyond prior analyses of AIS in that we account for the sampler not having reached equilibrium. In the full-batch setting, we show that DAIS retains the consistency guarantee of AIS (despite the lack of MH steps) and has a sublinear convergence rate.

Furthermore, motivated by the problem of learning from large-scale datasets, we study a stochastic variant of our algorithm that uses gradients estimated from a subset of the dataset. Given the success of stochastic optimization~\citep{robbins1951stochastic, bottou201113} and stochastic gradient MCMC algorithms~\citep{welling2011bayesian, chen2014stochastic}, one may presume that stochastic gradient \method{} would perform well.
Surprisingly, the stochastic version of DAIS can be arbitrarily bad. In particular, we show that the log ML estimates of \method{} with stochastic gradients are inconsistent due to a fundamental incompatibility between the goals of last-iterate convergence to the posterior and elimination of the accumulated stochastic error. This is in stark contrast with other settings such as gradient-based optimization and Langevin dynamics, where the gradient noise can be washed out by taking smaller steps. This indicates that annealing-based ML estimation with minibatch gradients may require new ideas. %

We validate our theoretical analysis with simulations. We also demonstrate empirically that DAIS can be applied to variational autoencoders (VAEs)~\citep{kingma2013auto, rezende2014stochastic} for a tighter evidence lower bound, which in turn leads to improved performance compared to the vanilla VAE. 
We also compare to importance weighted autoencoders (IWAE)~\citep{burda2016importance}.
While IWAE is more effective with a low compute budget, we show that DAIS eventually outperforms IWAE as compute increases.
Finally, like AIS, DAIS can be used to evaluate generative models. We show that it performs on par with AIS despite the removal of the MH correction step and outperforms the IWAE bound by a large margin.

\section{Background}
\subsection{Marginal Likelihood Estimation}

For a model $\model$ and observed data $\data$, one can define the marginal likelihood (ML) as
\begin{equation}\label{eq:ml}
    p(\data | \model) = \int p(\data, \params | \model) d \params = \int p(\data | \params, \model) p(\params | \model) d \params,
\end{equation}
where $\params$ denotes the parameters of the model. ML estimation can be regarded as an instance of estimating the partition function of an unnormalized distribution. Given a distribution defined as $p(\params) = f(\params)/\partition$ where the unnormalized density $f(\params)$ can be efficiently computed, we are interested in estimating the partition function $\partition = \int f(\params) d\params$. Here, $f(\params)$ corresponds to $p(\data, \params| \model)$ in \eqref{eq:ml}. 
In this paper, we focus on the setting of ML estimation because the stochastic version of DAIS is most naturally viewed as arising from data subsampling, but otherwise our analysis applies to the more general setting.

It is often the case that computing ML is computationally intractable.
One approach is to approximate \eqref{eq:ml} with Monte Carlo methods, e.g., one can approximate the integration using importance sampling:
\begin{equation}\label{eq:is}
    p(\data | \model) = \expect_{q(\params)}\left[\frac{p(\data | \params, \model) p(\params | \model)}{q(\params)} \right] \approx \frac{1}{S}\sum_{i=1}^S  \frac{p(\data | \params_i, \model) p(\params_i | \model)}{q(\params_i)} \quad \text{with }\params_i \sim q(\params)
\end{equation}
However, this estimation can exhibit high variance for small or medium $S$ when the target distribution $p(\data, \params | \model)$ and proposal distribution $q(\params)$ are dissimilar. 

\subsection{Annealed Importance Sampling}
Annealed importance sampling (AIS) is an algorithm which estimates the ML by gradually changing, or ``annealing'', a distribution. Formally, the algorithm takes in a sequence of distributions $p_0, \dots, p_K$, with $p_k(\params) = f_k(\params) / \partition_k $ and $\partition_k = \int f_k(\params) d\params$. In the context of ML estimation, the starting distribution $f_0$ is the tractable prior distribution $p(\params | \model)$ with $\partition_0 = 1$, while the target distribution $f_K$ is $p(\data, \params| \model)$ with $\partition_K = p(\data | \model)$.
For each $p_k$, one must also specify an MCMC transition operator $\trans_k$ which leaves $p_k$ invariant.

The output of AIS is an unbiased estimate $\hat{\partition}_K$ of the exact ML $\partition_K$. Importantly, unbiasedness holds for any finite $K$, as shown in~\citet{neal2001annealed}. Moreover, AIS can be viewed as importance sampling over an extended space~\citep{neal2001annealed}. In particular, we have $\partition_K = \expect_{q_\text{fwd}}[q_\text{bwd}/q_\text{fwd}]$ with the target and proposal distributions defined as
\begin{align}
    q_\text{fwd}(\params_{0:{K}}) &= p_0(\params_0) \trans_1(\params_1 | \params_0) \cdots \trans_{K}(\params_{K} | \params_{K-1}) \label{eq:fwd} \\
    q_\text{bwd}(\params_{0:{K}}) &= f_{K}(\params_{K}) \tilde{\trans}_{K}(\params_{K-1} | \params_{K}) \cdots \tilde{\trans}_1(\params_0 | \params_{1}), \label{eq:bwd}
\end{align}
where $\trans_k(\params|\params^\prime)$ is a forward MCMC kernel and $\tilde{\trans}_k(\params^\prime|\params) = \trans_k(\params | \params^\prime)p_k(\params^\prime)/p_k(\params)$ is the corresponding reverse kernel. 
Here, $q_\text{fwd}$ represents the chain of states generated by AIS, and $q_\text{bwd}$ is a fictitious (unnormalized) reverse chain which begins with a sample from $p_K$ and applies the transitions in reverse order. 
In practice, the intermediate distributions have to be chosen carefully for a low variance estimate $\hat{\partition}_K$. One typically uses geometric averages of the initial and target distributions:
\begin{equation}
    p_k(\params) = p_{\beta_k}(\params) = f_{\beta_k}(\params) /\partition_{\beta_k}  = f_0(\params)^{1 - \beta_k} f_K(\params)^{\beta_k} / \partition_{\beta_k} = p(\params | \model)p(\data|\params,\model)^{\beta_k} / \partition_{\beta_k} \label{eq:geometric_annealing}
\end{equation}
where $0 = \beta_0 < \beta_1 < \cdots < \beta_K = 1$ is the annealing schedule.
Indeed, AIS gives an unbiased estimate $\hat{\partition}$ of $\partition$. However, as $\partition$ can vary over many orders of magnitude, it is often more meaningful to talk about estimating $\log \partition$. Unfortunately, unbiased estimators of $\partition$ can result in biased estimators of $\log \partition$ because $\expect \log \hat{\partition} \leq \log \expect \hat{\partition}$ by Jensen's inequality, resulting in only a lower bound.
In particular, we have the AIS bound 
\begin{align}
    \expect_{q_\text{fwd}} \log \hat{\partition}_K &= \sum_{k=1}^K\expect_{q_\text{fwd}} \left[\log f_{\beta_k}(\params_{k-1}) - \log f_{\beta_{k-1}}(\params_{k-1}) \right] \label{eq:ais} \\
    &= \sum_{k=1}^K (\beta_k - \beta_{k-1}) \expect_{q_\text{fwd}}\left[\log p(\data| \params_{k-1}, \model) \right] \label{eq:logz-2}
\end{align}
where \eqref{eq:geometric_annealing} facilitated the simplification from \eqref{eq:ais} to \eqref{eq:logz-2}.
Of course, it is not enough to have a lower bound; we would also like the estimates to be close to the true value. Fortunately, AIS is \emph{consistent} in that the estimate $\log \hat{\partition}$ converges to the correct value in the limit of infinitely many intermediate distributions~\citep{neal2001annealed}, under the very idealized assumption of \emph{perfect transitions} (i.e.~that each transition $\trans_k$ generates an exact sample from $p_k$, independent of the previous state).
To give some intuition, the bound \eqref{eq:ais} can be simplified as
\begin{equation}\label{eq:logz}
    \expect_{q_\text{fwd}} \log \hat{\partition}_K = \log \partition_K - \sum_{k=1}^{K}\kldiv(p_{k-1} \klbars p_{k}),
\end{equation}
and the sum of the KL divergence terms diminishes as $K \rightarrow \infty$.

\vspace{-0.1cm}
\section{Differentiable Annealed Importance Sampling}
\vspace{-0.1cm}

\begin{wrapfigure}[13]{r}{0.5\textwidth} %
\vspace{-0.8cm}
\begin{minipage}{0.5\textwidth}
\begin{algorithm}[H]
\caption{Differentiable AIS~(DAIS)}
\begin{algorithmic}[0]
    \State $\params_0, \mom_0$ sample from $p_0(\params), \pi \triangleq \normal(\zero, \mass)$
    \State $\elbo_\text{DAIS} = -\log p_0 (\params_0)$
    \For{$k = 1, \dots, K$}
        \State $\params_{k-\frac{1}{2}} \leftarrow \params_{k-1} + \frac{\eta}{2}\mass^{-1}\mom_{k-1}$
        \State $\hat{\mom}_k \leftarrow \mom_{k-1} + \eta \nabla \log f_{\beta_k}(\params_{k-\frac{1}{2}})$
        \State $\params_{k} \leftarrow \params_{k-\frac{1}{2}} + \frac{\eta}{2}\mass^{-1}\hat{\mom}_{k}$
        \State $\mom_k \leftarrow \gamma \hat{\mom}_k + \sqrt{1 - \gamma^2}\noise, \; \noise \sim \normal(\zero, \mass)$
        \State $\elbo_\text{DAIS} \mathrel{+}= \log \pi(\hat{\mom}_k)) - \log \pi(\mom_{k-1})$
    \EndFor
    
    \State \Return $\elbo_\text{DAIS} \mathrel{+}= \log p(\data, \params_K | \model)$
\end{algorithmic}\label{alg:leapfrog}
\end{algorithm}
\end{minipage}
\end{wrapfigure}
In this section, we motivate and derive a differentiable AIS~(DAIS) algorithm for marginal likelihood (ML) estimation. We also discuss its application to variational inference for a tighter ELBO and a memory-efficient implementation.

Ideally, assuming a continuously parameterized model class (e.g., variational autoencoder), we would like to differentiate through the lower bound \eqref{eq:logz-2} to find an optimal model $\model$. However, AIS must be instantiated with an MCMC transition kernel $\trans_k$ that satisfies detailed balance to ensure it leaves $p_k$ invariant. In practice, this is typically achieved by using a MH step, which is generally not differentiable.\footnote{The discontinuity introduced by the MH step makes it hard to use the reparameterization trick, though this can be done using a delicate gradient estimator~\citep{naesseth2017reparameterization}. This is orthogonal to our work and our fix is simpler and easier for us to analyze.}
We thus remove the MH correction and, in particular, specify each transition to consist of a deterministic leapfrog integration step followed by a stochastic partial momentum refreshment~\citep{horowitz1991generalized}. 
Algorithm~\ref{alg:leapfrog} details the simulation of Hamiltonian dynamics using such transitions. 
With $\gamma = 0$, the algorithm resemables unadjusted Langevin dynamics~\citep{roberts1996exponential} but computes the ML bound on the fly. In practice, choosing $0 < \gamma < 1$ ($\gamma = 0.9$ is a common default) helps avoid random walk behavior and accelerates mixing~\citep{neal2011mcmc, chen2014stochastic}. Importantly, we retain the formalism of performing importance sampling on an extended space despite the loss of detailed balance.\footnote{This is true even with mini-batch gradients.} To show this, we can define the forward and (unnormalized) backward distributions as
\begin{align}
    q_\text{fwd}(\params_{0:{K}}, \mom_{0:K}) &= p_0(\params_0)\pi(\mom_0) \trans_1(\params_1, \mom_1 | \params_0, \mom_0) \cdots \trans_{K}(\params_{K}, \mom_K | \params_{K-1}, \mom_{K-1}) \label{eq:new-fwd} \\
    q_\text{bwd}(\params_{0:{K}}, \mom_{0:K}) &= f_{K}(\params_{K})\pi(\mom_K) \tilde{\trans}_{K}(\params_{K-1}, \mom_{K-1} | \params_{K}, \mom_K) \cdots \tilde{\trans}_1(\params_0, \mom_0 | \params_{1}, \mom_1) \label{eq:new-bwd}
\end{align}
where the transition operator $\trans_k(\params_k, \mom_k | \params_{k-1}, \mom_{k-1}) = \trans_k^\prime(\params_k, \hat{\mom}_k | \params_{k-1}, \mom_{k-1}) \trans_k^{\dprime} (\mom_k | \hat{\mom}_k)$ is 
the composition of a leapfrog step and momentum refreshment step. 
We define the reverse chain by starting with an exact sample and executing each of the above steps of Algorithm~\ref{alg:leapfrog} in the reverse order, which leads to a surprisingly simple expression for our estimator.
In particular, the backward transition operator is defined by $\tilde{\trans}_k(\params_{k-1}, \mom_{k-1} | \params_{k}, \mom_{k}) = \trans_k^{\dprime} (\hat{\mom}_k | \mom_k)\trans_k^\prime(\params_{k-1}, \mom_{k-1}| \params_k, -\hat{\mom}_k) $. 
Note that we need to flip the sign of $\hat{\mom}_k$ in the reverse chain to account for time reversal. 
As a consequence of the above definitions, we have 
\begin{equation}
    \trans_k^{\dprime}(\mom_k|\hat{\mom}_k) = \trans_k^{\dprime}(\hat{\mom}_k|\mom_k)\pi(\mom_k)/\pi(\hat{\mom}_k).
\end{equation}
This is because $\trans_k^{\dprime}(\mom_k|\hat{\mom}_k) = \normal(\gamma \hat{\mom}_k, (1 - \gamma^2)\mass)$ and $\trans_k^{\dprime}(\hat{\mom}_k|\mom_k) = \normal(\gamma \mom_k, (1 - \gamma^2)\mass)$.
Furthermore, since $\trans_k^\prime$ is a deterministic leapfrog update, it is reversible and volume preserving, so we have $\trans_k^\prime(\params_{k-1}, \mom_{k-1}| \params_k, -\hat{\mom}_k) = \trans_k^\prime(\params_k, \hat{\mom}_k | \params_{k-1}, \mom_{k-1})$. With this, we can derive the DAIS bound:
\begin{equation}\label{eq:dais}
\begin{aligned}
    \elbo_\text{DAIS} &= \expect_{q_\text{fwd}}\left[\log q_\text{bwd}(\params_{0:{K}}, \mom_{0:K}) - \log q_\text{fwd}(\params_{0:{K}}, \mom_{0:K}) \right] \\
    & = \expect_{q_\text{fwd}}\left[ \log p(\data, \params_K| \model) - \log p_0(\params_0) + \sum_{k=1}^K \log \frac{\pi(\hat{\mom}_k)}{\pi(\mom_{k-1})}  \right].
\end{aligned}
\end{equation}
We remark that this bound supports the computation of pathwise derivatives.

It's useful to consider an intuition for the final term of \eqref{eq:dais}, since this will help to clarify our convergence analysis. Observe that $\log \pi(\mom_k)$ is simply the negative kinetic energy plus a constant, so this term will be negative if the kinetic energy increases over the course of a leapfrog step and positive if it decreases. For small enough step sizes, leapfrog steps approximately conserve the total energy. If the posterior distribution is becoming more concentrated over the course of annealing (as is typically the case), the start of the leapfrog step is likely to have atypically high potential energy, so the leapfrog step will convert the potential energy to kinetic energy, and this term will be negative in expectation. Conversely, if the distribution is becoming more spread out, kinetic energy will be converted to potential energy, and the term will be positive in expectation. Hence, when summed over the whole trajectory, this term helps to estimate the volume of the support of the posterior distribution.

\subsection{Differentiable Annealed Variational Inference}

DAIS can be applied to variational inference for a tighter bound; we name this differentiable annealed variational inference (DAVI). We note that the general idea of incorporating auxiliary MCMC states into a variational approximation was discussed in \citet{salimans2015markov}, but their formulation requires the specification and learning of a reverse transition model, whereas ours does not.

Recall that we can lower bound the log ML by choosing a tractable variational distribution and optimizing the bound. This has been widely adopted in variational autoencoders~\citep{kingma2013auto, rezende2014stochastic} and Bayesian neural networks~\citep{blundell2015weight, zhang2018noisy}. The lower bound has the following form:
\begin{equation}\label{eq:elbo}
    \elbo \equiv \expect_{q_\phi} \left[\log p(\data, \params | \model) - \log q_\phi(\params) \right]
\end{equation}
However, the lower bound can be quite loose if the variational posterior family $q_\phi(\params)$ is restrictive, e.g. Gaussian. To improve the bound, we can define a new variational distribution on an extended space as in \eqref{eq:new-fwd}, but starting from $q_\phi$ rather than $p_0$:
\begin{equation}
    q_\text{fwd}(\params_{0:{K}}, \mom_{0:K}) = q_\phi(\params_0)\pi(\mom_0) \trans_1(\params_1, \mom_1 | \params_0, \mom_0) \cdots \trans_{K}(\params_{K}, \mom_K | \params_{K-1}, \mom_{K-1}) \label{eq:variational-fwd}.
\end{equation}
We also define associated intermediate distributions $p_k(\params) = q_\phi(\params)^{1 - \beta_k} p(\data, \params| \model)^{\beta_k}$. This gives a new lower bound:
\begin{equation}\label{eq:davi}
    \elbo_\text{DAVI} \equiv \expect_{q_\text{fwd}}\left[ \log p(\data, \params_K| \model) - \log q_\phi(\params_0) + \sum_{k=1}^K \log \frac{\pi(\hat{\mom}_k)}{\pi(\mom_{k-1})}  \right].
\end{equation}
We can maximize this lower bound over model parameters of $\mathcal{M}$, all parameters of AIS (e.g., annealing schedule $\beta_k$) as well as variational parameters $\phi$.

\subsection{Memory-Efficient Implementation}
\begin{wraptable}[7]{R}{0.43\textwidth}
    \centering
    \vspace{-0.4cm}
    \caption{Memory and time usage of DAIS implementations. $B$ is 32 for single-precision floating-point format.}
    \label{tab:memory_efficient_dais}
    \vspace{-0.2cm}
    \resizebox{0.43\textwidth}{!}{%
    \begin{tabular}{lcc}
        \toprule
        Scheme & Memory & Time \\
        \midrule
        Naive & $\bigo(B K)$ & $\bigo(K)$ \\
        Rev. Learning & $\bigo(\log_2(1/\gamma)K)$ & $\bigo(K)$ \\
        \bottomrule
    \end{tabular}}
\end{wraptable}
Naively optimizing instantiations of \eqref{eq:dais} or \eqref{eq:davi} w.r.t.~parameters using reverse-mode differentiation involves storing the entire sequence of sampled states $\params_0, \mom_0, \dots, \params_K, \mom_K$. 
This can be problematic in cases when $K$ is large due to the large memory overhead.
However, DAIS is compatible with the idea of reversible learning~\citep{maclaurin2015gradient}, which ameliorates this problem. Instead of storing the states in memory, we can compute the previous state given the current state by reversing the dynamics. Recall that each DAIS transition is deterministic and reversible other than the use of noise $\noise_k$ for momentum refreshment. The exact noise samples can also be computed in reverse if one uses a deterministic and reversible scheme (e.g. the linear congruential generator) for managing pseudorandom number generator seeds. Assuming exact arithmetic (in practice, this is impossible), this means that the memory footprint of DAIS can be made \emph{constant} with respect to the number of intermediate distributions $K$. Similar memory-efficiency tricks have also been used in other applications~\citep{li2020scalable, ruan2021improving}.

However, as discussed by \citet{maclaurin2015gradient}, reversible learning with finite arithmetic precision requires some storage to counteract compounding round-off error. For $\gamma \neq 0$ ($\gamma = 0.9$ is a common default), we need on average $\log_2(1 / \gamma)$ bits per parameter per step, which is still small compared to naive storage. We defer further exposition on memory-efficient DAIS to Appendix \ref{app:memory_efficient}. 
We remark that reversible learning is a potentially crucial property of DAIS as it affords some degree of scaling to longer chain lengths and, indirectly, bigger models.

\section{Convergence Analysis for Bayesian Linear Regression}\label{sec:blr}
\citet{neal2001annealed} has pointed that AIS is consistent, i.e. that it converges to the true log ML value in the limit of infinitely many intermediate distributions. However, these consistency results depend on the idealized assumption of perfect transitions (where each transition returns an independent exact sample), and therefore don't account for the time required for the samples to reach equilibrium. Here, we analyze DAIS for a Bayesian linear regression model with realistic (imperfect) transitions. Accounting for the convergence of the sampler is essential for separating the behaviors in the full-batch and mini-batch regimes.

 In particular, we focus on the Bayesian linear regression setting and adopt the following model:
\begin{equation*}
\begin{aligned}
    \text{prior: } & \params \sim \normal(\mean_p, \precs_p^{-1}) \\
    \text{likelihood: } & \targets \sim \normal(\inputMat \params, \sigma^2 \iden) \Rightarrow \params \sim \normal(\mean_*, \precs_{\text{lld}}^{-1}) \text{ where }  \precs_{\text{lld}} =  \tfrac{\inputMat^\top \inputMat}{\sigma^2} \text{ and } \mean_* = (\inputMat^\top \inputMat)^{-1} \inputMat^\top \targets \\
    \text{posterior: } & \params \sim \normal(\mean_{\text{pos}}, \precs_{\text{pos}}^{-1}) \text{ where } \mean_{\text{pos}} = \precs_{\text{pos}}^{-1} (\precs_{p}\mean_p +  \precs_{\text{lld}} \mean_*) \text{ and }  \precs_{\text{pos}} = \precs_{p} + \precs_{\text{lld}}
\end{aligned}
\end{equation*}
with $\inputMat \in \real^{n \times d}$ denoting the input features and $\targets \in \real^{n \times 1}$ the targets.
We choose Bayesian linear regression because it enables us to analyze the dynamics analytically in a similar manner as done by the noisy quadratic model (NQM)~\citep{zhang2019algorithmic} in the context of optimization.
We adopt the leapfrog step (we assume an identity mass matrix without loss of generality because we can absorb $\mass$ into the input matrix $\inputMat$ in Algorithm~\ref{alg:leapfrog}) and obtain the following update rule (see Appendix \ref{app:dais_update} for derivation):
\begin{equation}\label{eq:updates}
\begin{aligned}
    \params_{k} & \leftarrow \left(\iden - \frac{\eta_k^2}{2} \precs_\text{pos}^{\beta_k}\right) \params_{k-1} + \left(\eta_k \iden - \frac{\eta_k^3}{4} \precs_\text{pos}^{\beta_k}\right)\mom_{k-1} + \frac{\eta_k^2}{2} \precs_\text{pos}^{\beta_k} \mean_\text{pos}^{\beta_k} \\
    \hat{\mom}_{k} & \leftarrow - \eta_k \precs_\text{pos}^{\beta_k} \params_{k-1} + \left(\iden - \frac{\eta_k^2}{2} \precs_\text{pos}^{\beta_k}\right)\mom_{k-1} + \eta_k \precs_\text{pos}^{\beta_k} \mean_\text{pos}^{\beta_k}
\end{aligned}
\end{equation}
where $\precs_\text{pos}^{\beta_k} = \precs_{p} + \beta_k \precs_{\text{lld}}$ and $\mean_{\text{pos}}^{\beta_k} = (\precs_{\text{pos}}^{\beta_k})^{-1} (\precs_{p}\mean_p +  \beta_k \precs_{\text{lld}} \mean_*)$. With these iterative updates, we can compute the expectation and covariance of $\params_k$ and $\mom_k$ at any time $k$, which suffices to compute the lower bound in closed-form.

\subsection{Sublinear Convergence in the Full-Batch Setting}

With the model defined, we now show that our algorithm is asymptotically consistent, i.e., the bound~\eqref{eq:dais} converges to exact log ML as $K$ goes to infinity. For Bayesian linear regression, the update rules in \eqref{eq:updates} are affine transformations of Gaussian random variables, so the distribution of $\params_k$ is also Gaussian in the form of $\normal(\mean_k, \cov_k)$. We can compute the gap between the log ML and our lower bound in closed-form (see Appendix \ref{app:gap} for derivation):
\begin{multline}\label{eq:gap}
    \log p(\data) - \elbo_\text{DAIS} = \\ \underbrace{\frac{1}{2} \|\mean_K - \mean_\text{pos}\|_{\precs_\text{pos}}^2}_{\circled{1}}  + \underbrace{\frac{1}{2}\trace(\precs_\text{pos}\cov_K) - \frac{d}{2}}_{\circled{2}}  + \underbrace{\frac{1}{2}\log \frac{|\cov_\text{pos}|}{|\cov_p|} - \expect_{q} \left[ \sum_{k=1}^K  \log \frac{\pi(\hat{\mom}_k)}{\pi(\mom_{k-1})} \right]}_{\circled{3}}
\end{multline}
where $d$ is the feature dimension. 
Here, $\circled{1}$ and $\circled{2}$ measure the error of last-iterate Markov chain convergence and will both vanish as long as $\mean_K \rightarrow \mean_\text{pos}$ and $\cov_K \rightarrow \cov_\text{pos}$. We will show later that they converge with a rate of $\bigo(\frac{1}{\eta^2 K})$. The key is to show that $\mean_k$ (resp. $\precs_k$) lags behind $\mean_\text{pos}^{\beta_k}$ (resp. $\precs_{pos}^{\beta_k}$) with roughly $\frac{1}{\eta^2}$ steps. Formally, we have the following.
\begin{restatable}{lemma}{lemfirst}\label{lem:key-lemma}
    Given equally spaced $\beta_k$, running \method{} with $\gamma = 0$ and $\eta \sim \frac{1}{K^c}$ where $c \geq \frac{1}{4}$ yields
    \begin{equation}
        \|\mean_{k-1} - \mean_\text{pos}^{\beta_k} \|_2 = \bigo(K^{2c - 1}), \; \|\precs_{k-1} - \precs_\text{pos}^{\beta_k} \|_2 = \bigo(K^{2c - 1}).
    \end{equation}
\end{restatable}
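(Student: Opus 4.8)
The plan is to first use $\gamma = 0$ to decouple the dynamics. With full momentum refreshment, $\mom_{k-1} \sim \normal(\zero, \iden)$ is independent of $\params_{k-1}$ and mean zero, so taking expectations in the first line of \eqref{eq:updates} kills the momentum term and yields the affine mean recursion $\mean_k = \mathbf{A}_k \mean_{k-1} + (\iden - \mathbf{A}_k)\mean_\text{pos}^{\beta_k}$ with $\mathbf{A}_k \triangleq \iden - \tfrac{\eta^2}{2}\precs_\text{pos}^{\beta_k}$, while independence gives the covariance recursion $\cov_k = \mathbf{A}_k \cov_{k-1}\mathbf{A}_k^\top + \mathbf{B}_k\mathbf{B}_k^\top$ with $\mathbf{B}_k \triangleq \eta\iden - \tfrac{\eta^3}{4}\precs_\text{pos}^{\beta_k}$. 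Since $\mathbf{A}_k,\mathbf{B}_k$ are polynomials in $\precs_\text{pos}^{\beta_k} = \precs_p + \beta_k\precs_\text{lld}$, I would diagonalize in their common eigenbasis (exact when $\precs_p,\precs_\text{lld}$ commute, e.g. an isotropic prior) to reduce each recursion to a family of scalar recursions, one per eigendirection, with a generic eigenvalue $\lambda_k$ of $\precs_\text{pos}^{\beta_k}$ confined to the fixed interval $[\underline\lambda,\overline\lambda] \triangleq [\lambda_{\min}(\precs_p),\lambda_{\max}(\precs_p+\precs_\text{lld})]$. Two facts are recorded for later: the chain starts exactly on target, $\mean_0 = \mean_p = \mean_\text{pos}^{\beta_0}$ and $\cov_0 = \precs_p^{-1}$; and for $K$ large enough $\tfrac{\eta^2}{2}\overline\lambda < 1$, so each factor $a_k \triangleq 1 - \tfrac{\eta^2}{2}\lambda_k$ lies in $(0,1)$ with $a_k \le 1-\rho$, where $\rho \triangleq \tfrac{\eta^2}{2}\underline\lambda = \Theta(K^{-2c})$.

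For the mean I would track the per-direction error $d_k \triangleq \mean_k - \mean_\text{pos}^{\beta_k}$. Subtracting the target from the recursion gives $d_k = a_k\,(d_{k-1} + \delta_{k-1})$, where $\delta_{k-1} \triangleq \mean_\text{pos}^{\beta_{k-1}} - \mean_\text{pos}^{\beta_k}$ is the one-step drift of the moving target; since $\beta \mapsto \mean_\text{pos}^\beta = (\precs_p+\beta\precs_\text{lld})^{-1}(\precs_p\mean_p + \beta\precs_\text{lld}\mean_*)$ is Lipschitz on $[0,1]$ and $\beta_k - \beta_{k-1} = 1/K$, we have $|\delta_{k-1}| = \bigo(1/K)$. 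Unrolling with the exact initialization $d_0 = 0$ gives $d_k = \sum_{j=1}^k\big(\prod_{l=j}^k a_l\big)\delta_{j-1}$, and bounding each product by $(1-\rho)^{k-j}$ yields $|d_k| \le \tfrac{C}{K}\sum_{m\ge 0}(1-\rho)^m = \tfrac{C}{K\rho} = \bigo(K^{2c-1})$. Finally $\mean_{k-1} - \mean_\text{pos}^{\beta_k} = d_{k-1} + \delta_{k-1}$, and since $c \ge 1/4$ the drift $\bigo(1/K)$ is dominated by $\bigo(K^{2c-1})$, giving the mean bound.

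For the precision I would run the analogous argument on the per-direction variance recursion $\sigma_k^2 = a_k^2\sigma_{k-1}^2 + b_k^2$ with $b_k \triangleq \eta\,(1 - \tfrac{\eta^2}{4}\lambda_k)$. A short calculation shows its frozen-$\lambda_k$ fixed point is $\sigma_{*,k}^2 = \tfrac{1}{\lambda_k} - \tfrac{\eta^2}{4}$, i.e. the true posterior variance $1/\lambda_k$ up to a discretization bias of order $\eta^2 = \bigo(K^{-2c})$, which for $c\ge 1/4$ is itself $\bigo(K^{2c-1})$. Writing $e_k \triangleq \sigma_k^2 - \sigma_{*,k}^2$ gives $e_k = a_k^2\,(e_{k-1} + \delta_{k-1}^{(v)})$ with contraction $a_k^2 = 1-\Theta(\eta^2)$, target drift $\delta_{k-1}^{(v)} = \bigo(1/K)$, and initialization $e_0 = \eta^2/4 = \bigo(K^{-2c})$; the same geometric-sum estimate then yields $|\sigma_{k-1}^2 - 1/\lambda_k| = \bigo(K^{2c-1})$ after adding back the $\bigo(\eta^2)$ bias and the $\bigo(1/K)$ target shift. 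Because each $\sigma_{k-1}^2$ stays in some $[c_0, C_0]$ with $c_0 > 0$ (it remains near $1/\lambda_k \in [1/\overline\lambda, 1/\underline\lambda]$), inversion is well-conditioned and $\big|\tfrac{1}{\sigma_{k-1}^2} - \lambda_k\big| = \bigo(K^{2c-1})$; collecting eigendirections into the operator norm gives $\|\precs_{k-1} - \precs_\text{pos}^{\beta_k}\|_2 = \bigo(K^{2c-1})$.

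The mean part is essentially routine; the delicate part, and the one I expect to be the main obstacle, is the precision. Two points need care. First, the covariance recursion is genuinely a matrix recursion, so decoupling into scalar variance recursions relies on simultaneous diagonalizability of the $\precs_\text{pos}^{\beta_k}$; without it the scalar fixed-point computation must be replaced by a discrete-Lyapunov fixed-point argument $\cov_* = \mathbf{A}\cov_*\mathbf{A}^\top + \mathbf{B}\mathbf{B}^\top$, and the $\bigo(\eta^2)$ stationary bias controlled in operator norm, which is more involved. Second, the target $K^{2c-1}$ is the sum of two genuinely distinct effects, a tracking lag of order $\tfrac{1}{\eta^2}\cdot\tfrac{1}{K} = K^{2c-1}$ and a stationary discretization bias of order $\eta^2 = K^{-2c}$, and the hypothesis $c \ge 1/4$ is exactly what guarantees the bias does not exceed the lag (they coincide at $c=1/4$); the proof must therefore retain both error sources and verify this comparison rather than discard either. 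Everything else reduces to the uniform geometric-series estimate, made legitimate by the uniform spectral bounds $\lambda_k \in [\underline\lambda,\overline\lambda]$.
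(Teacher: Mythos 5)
Your core mechanism is exactly the one driving the paper's proof: a per-step contraction of size $\Theta(\eta^2)$ fighting a per-step target drift of size $\bigo(1/K)$, giving a tracking lag of order $1/(K\eta^2)=\bigo(K^{2c-1})$, plus a stationary discretization bias of order $\eta^2=K^{-2c}$ in the covariance, with $c\ge\tfrac14$ being precisely the condition that the bias not exceed the lag. The paper reaches the same two bounds by induction directly in matrix operator norm, maintaining $\|\mean_{k-1}-\mean_\text{pos}^{\beta_k}\|_2\le C_2K^{2c-1}$ and $\|\cov_{k-1}-\cov_\text{pos}^{\beta_k}\|_2\le C_4K^{2c-1}$, carrying the bias terms $\frac{\eta^4}{4}\precs_\text{pos}^{\beta_k}-\frac{\eta^6}{16}(\precs_\text{pos}^{\beta_k})^2$ inside the induction and hiding the role of $c\ge\tfrac14$ in the constraint $a^2C_4-C_3-a^4C_5-\frac{a^4C_4}{4}>0$ (which needs $K^{-4c}\lesssim K^{-1}$). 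Your unrolling-plus-geometric-sum route, centered at the exact frozen fixed point $\cov_\text{pos}^{\beta_k}-\frac{\eta^2}{4}\iden$ (your fixed-point computation checks out), is a legitimate alternative and arguably isolates the two error sources more cleanly.

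The genuine gap is the diagonalization step. Reducing the covariance recursion to independent scalar recursions requires all matrices $\precs_\text{pos}^{\beta_k}=\precs_p+\beta_k\precs_\text{lld}$ to share an eigenbasis, i.e.\ $\precs_p\precs_\text{lld}=\precs_\text{lld}\precs_p$, and this is \emph{not} without loss of generality: the only changes of variables that preserve the identity-mass leapfrog dynamics and the isotropic momentum refreshment are orthogonal, so you can diagonalize one of $\precs_p,\precs_\text{lld}$ but not both; the lemma is stated for arbitrary $\inputMat$ and $\precs_p$. Fortunately the repair is much lighter than the discrete-Lyapunov machinery you anticipate. Within a single step $k$, every matrix appearing in the recursion --- $\mathbf{A}_k=\iden-\frac{\eta^2}{2}\precs_\text{pos}^{\beta_k}$, $\mathbf{B}_k\mathbf{B}_k^\top=\eta^2(\iden-\frac{\eta^2}{4}\precs_\text{pos}^{\beta_k})^2$, and the target --- is a polynomial in the single matrix $\precs_\text{pos}^{\beta_k}$, so they all commute with one another and your frozen fixed point $\cov_{*,k}=\cov_\text{pos}^{\beta_k}-\frac{\eta^2}{4}\iden$ holds exactly as a matrix identity; no Lyapunov equation need be solved. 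Commutativity is only lost \emph{across} steps (between $\mathbf{A}_k$ and $\mathbf{A}_{k+1}$), but your error recursions $d_k=\mathbf{A}_k(d_{k-1}+\delta_{k-1})$ and $e_k=\mathbf{A}_k(e_{k-1}+\cov_{*,k-1}-\cov_{*,k})\mathbf{A}_k$ never use it: operator-norm submultiplicativity together with the uniform bound $\|\mathbf{A}_k\|_2\le 1-\frac{\eta^2}{2}$ (from $\precs_\text{pos}^{\beta_k}\succeq\precs_p\succeq\iden$, the same normalization the paper adopts, valid for $K$ large enough that $\eta^2\lambda_{\max}(\precs_p+\precs_\text{lld})<2$) yields the identical geometric-sum estimates. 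The mean half of your argument needs no diagonalization at all. With that one substitution, your proof goes through in the full generality of the lemma.
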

We remark that the assumption of $\beta_k$ being equally spaced is not essential and can be relaxed as long as they are chosen by a scheme that leads to $\beta_k - \beta_{k-1}$ going down approximately in inverse proportion to $K$. In addition, we note that the assumption of full momentum refreshment is for convenience and we believe a similar result holds for $\gamma > 0$.

Importantly, this lemma implies that both $\circled{1}$ and $\circled{2}$ vanish sublinearly if we choose $c < \frac{1}{2}$. 
The analysis of error term $\circled{3}$ is more nuanced. In particular, this error could either come from using transitions for
each of these intermediate distributions that do not bring the distribution close to equilibrium, or from using a finite number of
distributions to anneal from $p_0$ to $p_K$. Surprisingly, the error $\circled{3}$ decays as fast as the other two terms if the step size scales as $1/K^c$ with $c \geq \frac{1}{4}$.
In summary, we have the following theorem.
\begin{restatable}{thm}{thmfirst}\label{thm:full-batch}
    Given equally spaced $\beta_k$, running \method{} with $\gamma = 0$ and $\eta \sim \frac{1}{K^c}$ where $c \geq \frac{1}{4}$ yields
    \begin{equation*}
        \log p(\data) - \elbo_\text{DAIS} = \bigo(K^{2c - 1}).
    \end{equation*}
    With $c = \frac{1}{4}$, we have the convergence rate $\bigo(1/\sqrt{K})$.
\end{restatable}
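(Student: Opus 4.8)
The plan is to bound the three error terms in the exact decomposition \eqref{eq:gap} separately, combining the lag estimates of Lemma~\ref{lem:key-lemma} with a closed-form evaluation of the momentum log-ratio. Since the updates \eqref{eq:updates} are affine and $\gamma=0$ refreshes the momentum to $\normal(\zero,\iden)$ at every step, $\params_k$ stays Gaussian with mean $\mean_k$ and covariance $\cov_k$, and every quantity below can be written in terms of $\mean_k$, $\cov_k$, the instantaneous targets $\mean_\text{pos}^{\beta_k}$ and $\precs_\text{pos}^{\beta_k}$, and the step size $\eta$.

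Terms $\circled{1}$ and $\circled{2}$ follow almost directly from Lemma~\ref{lem:key-lemma}. Because $\beta_K=1$ we have $\mean_\text{pos}^{\beta_K}=\mean_\text{pos}$ and $\precs_\text{pos}^{\beta_K}=\precs_\text{pos}$, so the lemma gives $\|\mean_{K-1}-\mean_\text{pos}\|_2=\bigo(K^{2c-1})$; one further mean update contracts by $\iden-\tfrac{\eta^2}{2}\precs_\text{pos}$ (operator norm $\le 1$ for stable $\eta$), hence $\|\mean_K-\mean_\text{pos}\|_2=\bigo(K^{2c-1})$ and $\circled{1}=\tfrac12\|\mean_K-\mean_\text{pos}\|_{\precs_\text{pos}}^2=\bigo(K^{4c-2})$, which is lower order than the claimed rate whenever $c<\tfrac12$. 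For $\circled{2}$ I write $\tfrac12\trace(\precs_\text{pos}\cov_K)-\tfrac{d}{2}=\tfrac12\trace\!\big(\precs_\text{pos}(\cov_K-\cov_\text{pos})\big)$ and use the resolvent identity $\cov_K-\cov_\text{pos}=\precs_K^{-1}(\precs_\text{pos}-\precs_K)\precs_\text{pos}^{-1}$ to turn the precision lag $\|\precs_K-\precs_\text{pos}\|_2=\bigo(K^{2c-1})$ into $\circled{2}=\bigo(K^{2c-1})$; here the $\bigo(\eta^2)=\bigo(K^{-2c})$ discretization bias of the unadjusted chain's stationary covariance is absorbed into this rate precisely because $c\ge\tfrac14$.

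The heart of the proof is $\circled{3}$. I first rewrite the log-volume term as a telescoping sum: with $g(\beta)\triangleq\tfrac12\log|\precs_\text{pos}^{\beta}|$ we have $\tfrac12\log\frac{|\cov_\text{pos}|}{|\cov_p|}=g(0)-g(1)=\sum_{k=1}^K\big(g(\beta_{k-1})-g(\beta_k)\big)$, so that $\circled{3}=\sum_k\big[(g(\beta_{k-1})-g(\beta_k))-\expect_q\log\tfrac{\pi(\hat{\mom}_k)}{\pi(\mom_{k-1})}\big]$. Using $\hat{\mom}_k=-\eta\precs_\text{pos}^{\beta_k}(\params_{k-1}-\mean_\text{pos}^{\beta_k})+(\iden-\tfrac{\eta^2}{2}\precs_\text{pos}^{\beta_k})\mom_{k-1}$, the independence of the freshly sampled $\mom_{k-1}$ from $\params_{k-1}$, and $\log\pi(\mom)=-\tfrac12\|\mom\|^2+\text{const}$, the momentum log-ratio evaluates in closed form to a trace expression in $\precs_\text{pos}^{\beta_k}$, $\cov_{k-1}$, and $\delta_k\triangleq\mean_{k-1}-\mean_\text{pos}^{\beta_k}$. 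I then Taylor-expand both the summand $g(\beta_{k-1})-g(\beta_k)$ in $\Delta\beta=1/K$ and the trace expression in $\eta$, substituting the tracking behaviour of $\cov_{k-1}$: the unadjusted chain has stationary covariance $(\precs_\text{pos}^{\beta_k})^{-1}-\tfrac{\eta^2}{4}\iden$ and trails the shrinking target with lag of order $\Delta\beta/\eta^2$. The key phenomenon is a double cancellation — the leading $\bigo(1/K)$ annealing terms cancel between $g(\beta_{k-1})-g(\beta_k)$ and the momentum ratio, and the $\bigo(\eta^4)$ terms arising from the discretization bias cancel against the explicit leapfrog correction — leaving a per-step residual of order $\eta^2\|\delta_k\|^2=\bigo(K^{2c-2})$ (mean lag) plus $\bigo(1/K^2)$ (second-order annealing). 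Summing over the $K$ steps gives $\circled{3}=\bigo(K^{2c-1})+\bigo(1/K)=\bigo(K^{2c-1})$, and adding the three terms yields the theorem, optimized to $\bigo(1/\sqrt K)$ at $c=\tfrac14$.

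The main obstacle is making the $\circled{3}$ cancellation rigorous: it requires controlling the covariance iterate $\cov_{k-1}$ (equivalently the normalized error $\precs_\text{pos}^{\beta_k}\cov_{k-1}-\iden$) to second order in $\eta$ and $\Delta\beta$ simultaneously, i.e. tracking both the $\bigo(\eta^2)$ stationary discretization bias and the $\bigo(\Delta\beta/\eta^2)$ annealing lag and showing their contributions to the momentum ratio match the Taylor expansion of $g$ up to the stated residual. This is cleanest in the simultaneous eigenbasis of $\precs_p$ and $\precs_\text{lld}$, where the covariance recursion decouples into scalar recursions $\sigma_k^2=(1-\tfrac{\eta^2\lambda_k}{2})^2\sigma_{k-1}^2+\eta^2(1-\tfrac{\eta^2\lambda_k}{4})^2$ that can be analyzed directly; the general non-commuting case follows by carrying the same estimates with operator-norm bounds in place of scalar ones. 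The condition $c\ge\tfrac14$ enters exactly here, guaranteeing the discretization bias is no larger than the annealing lag so that every residual is $\bigo(K^{2c-1})$.
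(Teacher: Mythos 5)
Your overall skeleton coincides with the paper's: the same three-term decomposition, Lemma~\ref{lem:key-lemma} for $\circled{1}$ and $\circled{2}$, a closed-form Gaussian evaluation of the momentum log-ratio, and a telescoping log-determinant for $\circled{3}$. Moreover, the two cancellations you invoke are real: the leapfrog chain with full refreshment targeting a fixed precision $\precs$ does have stationary covariance exactly $\precs^{-1}-\frac{\eta^2}{4}\iden$, and the pure $\bigo(\eta^4)$ discretization terms do cancel exactly (in the paper this appears when \eqref{eq:cov-recur-sim} is substituted into \eqref{eq:cov-v}, leaving only $\eta^4\times\text{lag}$ corrections). The gap is in how you control the covariance part of $\circled{3}$. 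You telescope against the \emph{target} path, $\frac{1}{2}\log\frac{|\cov_\text{pos}|}{|\cov_p|}=\sum_k\bigl(g(\beta_{k-1})-g(\beta_k)\bigr)$ with $g(\beta)=\frac{1}{2}\log|\precs_\text{pos}^{\beta}|$, and assert a per-step residual of $\bigo(K^{2c-2})+\bigo(K^{-2})$. After the exact cancellations, that residual is $\frac{1}{2}\trace\bigl(\bigl[(\cov_\text{pos}^{\beta_k}-\cov_\text{pos}^{\beta_{k-1}})-(\cov_k-\cov_{k-1})\bigr]\precs_\text{pos}^{\beta_k}\bigr)$ plus benign terms: a difference of two $\bigo(1/K)$ quantities. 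Lemma~\ref{lem:key-lemma} cannot show this is $\bigo(K^{2c-2})$, because the lemma is only an \emph{upper bound} on the lag, while your per-step cancellation needs the lag to \emph{equal} its quasi-equilibrium value (of size $\Delta\beta/\eta^2$) up to $\bigo(K^{2c-2})$ corrections --- i.e., a sharp asymptotic lag characterization with geometric decay of the deviation. You flag exactly this as ``the main obstacle,'' but the proposed remedy (diagonalize and study scalar recursions) restates the problem rather than solving it; with only Lemma~\ref{lem:key-lemma} in hand, your per-step bound is $\bigo(K^{-1})$, which sums to $\bigo(1)$, not $o(1)$.

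There are two ways to close the hole, neither requiring a sharper lemma. (a) Keep your target-path telescoping but sum by parts instead of bounding per step: the partial sums of the increment differences telescope to $\cov_\text{pos}^{\beta_j}-\cov_j$, which is $\bigo(K^{2c-1})$ by (the covariance form of) Lemma~\ref{lem:key-lemma}, and since $\|\precs_\text{pos}^{\beta_{k+1}}-\precs_\text{pos}^{\beta_k}\|_2=\bigo(1/K)$, Abel summation bounds the whole sum by $\bigo(K^{2c-1})$. (b) Do what the paper does: telescope against the \emph{iterate} path. The paper writes the summed kinetic term as $\frac{1}{2}\sum_k\trace\bigl((\cov_k-\cov_{k-1})\precs_{k-1}\bigr)+\frac{1}{2}\sum_k\trace\bigl((\cov_k-\cov_{k-1})(\precs_\text{pos}^{\beta_k}-\precs_{k-1})\bigr)$; the first sum is a Riemann sum for $\frac{1}{2}\int\trace(\cov^{-1}d\cov)=\frac{1}{2}\log\frac{|\cov_K|}{|\cov_p|}$ with error $\bigo\bigl(\sum_k\|\cov_k-\cov_{k-1}\|_2^2\bigr)=\bigo(K^{-1})$, the second is $\bigo(K\cdot K^{-1}\cdot K^{2c-1})=\bigo(K^{2c-1})$, and Lemma~\ref{lem:key-lemma} is invoked only once, at the endpoint, to replace $\log|\cov_K|$ by $\log|\cov_\text{pos}|$. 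Both devices need only the upper bound the lemma actually provides; your per-step route, as written, needs strictly more.
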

We remark that with perfect transitions, the requirement of $c \geq 1/4$ is not necessary and we can achieve $\bigo(1/K)$ convergence, as also shown in~\citet{grosse2013annealing}. The gap between $\bigo(1/\sqrt{K})$ and $\bigo(1/K)$  highlights the importance of considering convergence to the stationary distribution.

\subsection{Inconsistency in the Stochastic Setting}
Standard AIS involves MH steps which require a costly computation using all of the data, thus defeating the potential computational efficiency benefit of stochastic gradients~\citep{chen2014stochastic}. This is likely why the convergence properties of stochastic gradient AIS were previously unknown. 
In contrast, the only part of DAIS that requires full-batch computation is the gradient term in the leapfrog step of Algorithm~\ref{alg:leapfrog}, which may be amenable to estimation via mini-batching.

We have shown that our algorithm is asymptotically consistent in the full-batch setting. Often, a consistent/convergent algorithm in the deterministic setting readily implies a similar convergence result in the stochastic setting. 
For example, SGD~\citep{robbins1951stochastic} and SGMCMC~\citep{chen2014stochastic, ma2015complete} are both convergent in the presence of noise.
This begs the question of whether DAIS is consistent when we only have access to stochastic gradients. 
Here, we adopt an additive noise model\footnote{In reality, the noise consists of two parts: multiplicative input subsampling noise and additive label noise. We can assume that the noise is lower bounded by the additive part. See Appendix~\ref{app:noise-model} for justifications.} %
$\tilde{\nabla} \log f_k(\params) = \nabla \log f_k(\params) + \noise$. 
This model is commonly used in the stochastic approximation literature, and such a model has also been adopted in \citet{chen2014stochastic}. With such a noise model, we have the following dynamics:
\begin{equation}\label{eq:sto-updates}
\begin{aligned}
    \params_{k} & \leftarrow \left(\iden - \frac{\eta_k^2}{2} \precs_\text{pos}^{\beta_k}\right) \params_{k-1} + \left(\eta_k \iden - \frac{\eta_k^3}{4} \precs_\text{pos}^{\beta_k}\right)\mom_{k-1} + \frac{\eta_k^2}{2} \precs_\text{pos}^{\beta_k} \mean_\text{pos}^{\beta_k} + {\color{blue}\frac{\eta_k^2}{2}  \noise} \\
    \hat{\mom}_{k} & \leftarrow - \eta_k \precs_\text{pos}^{\beta_k} \params_{k-1} + \left(\iden - \frac{\eta_k^2}{2} \precs_\text{pos}^{\beta_k}\right)\mom_{k-1} + \eta_k \precs_\text{pos}^{\beta_k} \mean_\text{pos}^{\beta_t} + {\color{blue}\eta_k  \noise}
\end{aligned}
\end{equation}
where stochastic noise $\noise$ has variance lowered bounded by $\cov_{\noise}$. Further, we let $\mean_k^\mom = \expect[\hat{\mom}_k]$ and $\cov_k^\mom$ be the covariance of $\hat{\mom}_k$. Surprisingly, we find that \method{} is incompatible with stochastic gradients, even though it admits the same importance sampling interpretation if the reverse transition $\tilde{\trans}_k$ conditions on the same mini-batch of data as $\trans_k$. We summarize the result in the following theorem.
\begin{restatable}{thm}{thmsecond}
    For stochastic DAIS with full momentum refreshment ($\gamma = 0$ in Algorithm~\ref{alg:leapfrog}) and any stepsize scheme, we have %
    \begin{equation}
        \liminf_{K \rightarrow \infty} \left|\log p(\data) - \elbo_\text{DAIS}\right| > 0. 
    \end{equation}
\end{restatable}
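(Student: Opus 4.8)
The plan is to leverage the closed-form gap \eqref{eq:gap}, which continues to hold with the stochastic moments of \eqref{eq:sto-updates}, together with the affine--Gaussian structure of the dynamics. The crucial structural fact is that under full momentum refreshment ($\gamma=0$), at each step $k$ the three sources of randomness $\params_{k-1}$, the freshly drawn $\mom_{k-1}\sim\normal(\zero,\iden)$, and the gradient noise $\noise$ are mutually independent with $\expect[\noise]=\zero$. Consequently the \emph{mean} recursions for $\mean_k$ and $\mean_k^\mom$ are identical to their full-batch counterparts, while the \emph{covariance} recursions split additively. Writing $A_k \triangleq \iden - \tfrac{\eta_k^2}{2}\precs_\text{pos}^{\beta_k}$ (symmetric) and $\cov_k = \cov_k^{\mathrm{det}} + \Delta_k$, one gets $\Delta_k = A_k \Delta_{k-1} A_k + \tfrac{\eta_k^4}{4}\cov_\noise \succeq \zero$ with $\Delta_0=\zero$, and the momentum covariance obeys $\cov_k^\mom = \cov_k^{\mom,\mathrm{det}} + \eta_k^2 \precs_\text{pos}^{\beta_k}\Delta_{k-1}\precs_\text{pos}^{\beta_k} + \eta_k^2\cov_\noise$. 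Plugging these into \eqref{eq:gap} and using that $\circled{1}$ and the $\|\mean_k^\mom\|^2$ contributions are untouched by noise, the gap splits as the full-batch gap (for the \emph{same} step-size schedule) plus two manifestly non-negative noise terms, $\tfrac12\trace(\precs_\text{pos}\Delta_K)\geq0$ and the noise part of $\circled{3}$.

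I would then extract two lower bounds. First, since $\cov_k^\mom$ exceeds its deterministic counterpart by at least $\eta_k^2\cov_\noise\succeq\zero$, the noise part of $\circled{3}$ obeys
\[
    N_K \;\geq\; \frac{\trace(\cov_\noise)}{2}\, S_K,
    \qquad
    N_K \triangleq \frac{1}{2}\sum_{k=1}^K \trace\!\big(\cov_k^\mom - \cov_k^{\mom,\mathrm{det}}\big),
    \quad
    S_K \triangleq \sum_{k=1}^K \eta_k^2 ,
\]
with $\trace(\cov_\noise)>0$. Second, the full-batch gap equals the genuine divergence $\kldiv(q_\text{fwd}\klbars\bar{q}_{\text{bwd}})$ for the normalized reverse chain; marginalizing onto $\params_K$ (which can only decrease the KL) gives full-batch gap $\geq \kldiv\!\big(\normal(\mean_K^{\mathrm{det}},\cov_K^{\mathrm{det}})\klbars\normal(\mean_\text{pos},\cov_\text{pos})\big)$, because the forward marginal is $\normal(\mean_K^{\mathrm{det}},\cov_K^{\mathrm{det}})$ and the backward marginal is the exact posterior.

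These two bounds are in direct competition, and combining them finishes the proof. Let $D_\infty \triangleq \kldiv\!\big(\normal(\mean_p,\cov_p)\klbars\normal(\mean_\text{pos},\cov_\text{pos})\big)>0$, strictly positive whenever the data are informative ($\precs_{\text{lld}}\neq\zero$, so $\cov_p\neq\cov_\text{pos}$). The key estimate is that small total step energy pins the deterministic iterates near the prior: I would show $\|\mean_K^{\mathrm{det}}-\mean_p\|=\bigo(S_K)$ and $\|\cov_K^{\mathrm{det}}-\cov_p\|=\bigo(S_K)$ since each per-step increment is $\bigo(\eta_k^2)$ and telescopes. By continuity of the Gaussian KL, there is a threshold $\delta>0$ for which $S_K<\delta$ forces the marginal KL above $\tfrac12 D_\infty$. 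Hence for every $K$, either $S_K\geq\delta$ and the gap is $\geq\tfrac12\trace(\cov_\noise)\,\delta$ by the first bound, or $S_K<\delta$ and the gap is $\geq\tfrac12 D_\infty$ by the second, giving $\liminf_{K\to\infty}|\log p(\data)-\elbo_\text{DAIS}| \geq \min\{\tfrac12\trace(\cov_\noise)\delta,\ \tfrac12 D_\infty\}>0$.

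The main obstacle is the uniform movement bound in the last step: the $\bigo(S_K)$ control of $\|\mean_K^{\mathrm{det}}-\mean_p\|$ and $\|\cov_K^{\mathrm{det}}-\cov_p\|$ must hold with a constant independent of $K$ and of the particular schedule. This is where $S_K<\delta$ pays off twice: it forces every $\eta_k\leq\sqrt{\delta}$, so each $A_k$ is non-expanding and the deterministic iterates remain in a bounded neighborhood of the prior, making each increment genuinely $\bigo(\eta_k^2)$ with a uniform constant. Conceptually this is precisely the advertised incompatibility: driving $\circled{1}$ and $\circled{2}$ to zero (last-iterate convergence to the posterior) forces $S_K$ to grow, but any $S_K$ bounded away from zero injects an irreducible $\Theta(S_K)$ of pathwise noise into $\circled{3}$ that cannot be annealed away.
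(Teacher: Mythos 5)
Your proposal is correct, and its first half coincides exactly with the paper's argument: under $\gamma=0$ the mean recursions are noise-free, the position/momentum covariances acquire additive PSD corrections, and consequently the stochastic gap equals the full-batch gap (same schedule) plus non-negative terms, of which the kinetic-energy part contributes at least $\tfrac12\trace(\cov_\noise)\sum_{k}\eta_k^2$. Where you genuinely diverge is in how the two competing requirements are combined. The paper argues asymptotically: consistency forces $\sum_k \eta_k^2 \to 0$, while vanishing of $\circled{1}$ forces, via the contraction product $\prod_k\bigl(1-\tfrac{C\eta_k^2}{2}\bigr)\to 0$, that $\sum_k\eta_k^2\to\infty$ --- a contradiction. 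You instead run a threshold dichotomy: for each $K$, either $S_K=\sum_k\eta_k^2\ge\delta$, in which case the noise term alone gives a gap of at least $\tfrac12\trace(\cov_\noise)\delta$; or $S_K<\delta$, in which case a uniform $\bigo(S_K)$ movement bound pins $(\mean_K^{\mathrm{det}},\cov_K^{\mathrm{det}})$ near $(\mean_p,\cov_p)$, and the data-processing inequality applied to $\kldiv(q_\text{fwd}\klbars \bar{q}_\text{bwd})$ lower-bounds the full-batch gap by the Gaussian KL to the posterior, which stays above $\tfrac12 D_\infty$ by continuity. Your route buys three things the paper's does not: an explicit uniform constant $\min\{\tfrac12\trace(\cov_\noise)\delta,\ \tfrac12 D_\infty\}$ for the liminf; robustness to the degenerate case $\mean_p=\mean_\text{pos}$ (where $\circled{1}\equiv 0$ and the paper's mean-based argument is vacuous, but $\cov_p\neq\cov_\text{pos}$ keeps $D_\infty>0$); and avoidance of the informal $\log(1-x)\approx -x$ step. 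The price is the extra bookkeeping you correctly flag as the main obstacle: the $\bigo(S_K)$ drift bound needs iterates to remain bounded uniformly over schedules, which does hold because $\eta_k^2<\delta$ makes each $\iden-\tfrac{\eta_k^2}{2}\precs_\text{pos}^{\beta_k}$ non-expanding and the annealed targets $\beta\mapsto(\mean_\text{pos}^{\beta},\cov_\text{pos}^{\beta})$ form a smooth path of bounded total variation on $[0,1]$, independent of the schedule. The paper's version is shorter; yours is tighter and closes the small gaps the paper leaves implicit.
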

Here, we give some intuition why DAIS fails in the stochastic setting. 
To ensure convergence of $\params_K$ to $\normal(\mean_\text{pos}, \cov_\text{pos})$, a major requirement is for the step sizes to satisfy $\lim_{K \rightarrow \infty}\sum_{k=1}^K \eta_k^2 = \infty$ \citep{robbins1951stochastic}. 
However, the randomness of mini-batching sampling would contribute to the variance of $\hat{\mom}_k$. In particular, we have the following recursion:
\begin{equation}
    \tilde{\cov}_k^\mom = \eta_k^2 \precs_\text{pos}^{\beta_k} \hat{\cov}_{k-1} \precs_\text{pos}^{\beta_k} + \left(\iden - \frac{\eta_k^2}{2}\precs_\text{pos}^{\beta_k}\right)^2 + \eta_k^2 \cov_{\noise}.
\end{equation}
For notational convenience, we let $\hat{\cov}_k^\mom \triangleq \eta_k^2 \precs_\text{pos}^{\beta_k} \hat{\cov}_{k-1} \precs_\text{pos}^{\beta_k} + (\iden - \frac{\eta_k^2}{2}\precs_\text{pos}^{\beta_k})^2$. Here, we used $\tilde{\cov}_k^\mom$, $ \hat{\cov}_k^\mom$ and $\hat{\cov}_k$ to avoid confusion with $\cov_k^\mom$ and $\cov_k$ in the full-batch setting.
In this case, if we follow ~\citet{stephan2017stochastic} and \citet{chen2014stochastic} in assuming $\noise$ is Gaussian,\footnote{This assumption is non-essential, though Gaussian noise is reasonable by the central limit theorem.} we have
\begin{equation}\label{eq:kinetic}
    \expect_{q} \left[ \sum_{k=1}^K  \log \frac{\pi(\hat{\mom}_k)}{\pi(\mom_{k-1})} \right] = \sum_{k=1}^K \left[-\frac{1}{2}\|\mean_k^\mom\|_2^2 - \frac{1}{2} \trace(\hat{\cov}_k^\mom) + \frac{d}{2}  \right] - \sum_{k=1}^K \left[\frac{1}{2}\eta_k^2 \trace(\cov_\noise) \right].
\end{equation}
The second term of \eqref{eq:kinetic} goes to infinity as $\lim_{K\rightarrow \infty} \sum_{k=1}^K \eta_k^2 = \infty$. 
Intuitively, the gradient noise adds to the kinetic energy, and the size of this contribution is proportional to $\eta_k^2$. Since this effect is cumulative over all $K$ steps, $\eta_k$ has to be reduced at least as $1/\sqrt{K}$ for the kinetic energy term to go down. However, this contradicts the requirement that $\lim_{K \rightarrow \infty}\sum_{k=1}^K \eta_k^2 = \infty$, needed for last-iterate convergence.
In summary, the convergence of $\params_K$ requires us to make sure the sum of step sizes goes to infinity, which in turn results in a non-convergent kinetic energy term~\eqref{eq:kinetic}. 

One may wonder why gradient noise does not hurt the convergence of SGLD~\citep{welling2011bayesian} or SGMCMC~\citep{ma2015complete}. Generally speaking, these algorithms are only concerned with the last iteration convergence to the true posterior, hence one can eliminate the stochastic error by taking more steps of a smaller size.
In contrast, our bound (and potentially other AIS-style algorithms) relies on all intermediate distributions, and so the error induced by stochastic gradient noise accumulates over the whole trajectory. Simply taking smaller steps fails to reduce the error.
We conjecture that AIS-style algorithms are inherently fragile to gradient noise.

\section{Related Works}
\vspace{-0.1cm}
For ML estimation (or partition function estimation), Sequential Monte Carlo (SMC)~\citep{doucet2001introduction, del2006sequential} is another popular method which is derived from particle filtering. While SMC is based on a different intuition from AIS, the underlying mathematics is equivalent. In SMC, the intermediate distributions are defined by conditioning on a sequence of increasing subsets of data. %
Both AIS and SMC are closely related to a broader family of techniques for partition function
estimation, all based on the following identity from statistical physics: $\log \partition_K - \log \partition_0 = \int_0^1 \expect_{\params \sim p_\beta}\left[\frac{d}{d\beta}\log f_\beta(\params) \right]d\beta$. In particular, the weight update in AIS can be seen as a finite difference approximation. In comparison, thermodynamic integration (TI)~\citep{frenkel2001understanding} estimates this integration using numerical quadrature, and path sampling~\citep{gelman1998simulating} does so with Monte Carlo integration. Recently, \citet{masrani2019thermodynamic} connected TI and variational inference for a tighter bound on the log ML, but they computed each intermediate term using importance sampling rather than annealing-based sampling algorithms. 
More recently\footnote{The paper appeared on arXiv after we submitted the paper.}, \citet{thin2021monte} proposed a similar algorithm based on sequential importance sampling with unadjusted Langevin kernels in the context of variational auto-encoders.  
Concurrently, \citet{geffner2021mcmc} proposed the same algorithm as our DAVI with a focus on the empirical side.

In the context of variational inference, many papers have also investigated tighter lower bounds for the log ML. \citet{burda2016importance} proposed a strictly tighter log-likelihood lower bound derived from importance weighting and \citet{luo2019sumo} recently extended this idea of importance sampling to derive an unbiased (but potentially high variance) estimator of log ML using the Russian roulette estimator~\citep{kahn1955use}.
\citet{salimans2015markov} proposed to incorporate MCMC iterations into the variational approximation. 
The central idea is that we can interpret the stochastic Markov chain as part of a variational approximation in an extended space, as we did in the paper.
However, the proposed methods require learning reverse kernels, which has a large impact on performance. 
The same authors also briefly discussed annealed variational inference, which combines variational inference and AIS. However, their derivation relies on the detailed balance assumption and is therefore not amenable to gradient-based optimization. Later, \citet{caterini2019hamiltonian} proposed the Hamiltonian VAE, which improves Hamiltonian variational inference \citep{salimans2015markov} with an optimally chosen reverse MCMC kernel. In particular, they removed the momentum sampling step and used deterministic transitions. The resulting algorithm can be thought of as a normalizing flow scheme in which the flow depends explicitly on the target distribution. 
Along this line, \citet{le2018auto, naesseth2018variational, maddison2017filtering} proposed to meld variational inference and SMC for time-series models.

Finally, stochastic gradient variants of several MCMC algorithms~\citep{welling2011bayesian, chen2014stochastic, ma2015complete} have been proposed over the last decade. 
In particular, these works showed that adding the ``right amount'' of noise to the parameter updates leads to samples from the target posterior as long as the step size is annealed. Importantly, the convergence rates of these algorithms are established in both the full-batch setting~\citep{dalalyan2017theoretical, cheng2018underdamped} and the stochastic setting~\citep{chen2015convergence, teh2016consistency, raginsky2017non, zou2020faster}. By contrast, the convergence properties for AIS and related algorithms were largely unknown even for the deterministic case, and it remains largely unexplored whether AIS can be made compatible with stochastic gradients.

\section{Simulations}
\vspace{-0.1cm}
In this section, we discuss the experiments used to validate our algorithm and theory.
Importantly, we do \emph{not} aim to achieve state-of-the-art on these tasks.

\subsection{Bayesian Linear Regression}
\vspace{-0.1cm}
In Section~\ref{sec:blr}, we proved for the Bayesian linear regression setting that while DAIS is asymptotically consistent with full-batch gradient, the
noise injected into the system via stochastic gradients precludes convergence. Here, we verify our theory with numerical simulations.
The $n$ input vectors $\inputMat \in \real^{n \times d}$ and targets $\targets \in \real^n$ respectively consist of entries sampled from $\normal(0, 0.01)$ and $\normal(0, 1)$. In particular, we choose $n = 10,000$ and $d = 10$ for our simulations (the results are qualitatively same with different $n$ and $d$). In addition, we set the observation variance $\sigma^2 = 1$. For convenience, we set the linear annealing scheme $\beta_k = \frac{k}{K}$.

In Figure~\ref{fig:simulation}, we report the gap between exact log ML and our bound as a function of number of intermediate distributions. With full-batch gradients, the simulations (solid and dotted lines) align well with our theoretical predictions (dashed lines) for different step-size scaling schemes, suggesting our bound in Theorem~\ref{thm:full-batch} is tight.
In addition, we observe in Figure~\ref{subfig:mb} that the gap fails to vanish with mini-batch gradients for all step-size scaling schemes. Interestingly, with $c = 1/2$, the gap stays constant. This matches our predictions that the deterministic error decays as $\bigo(K^{2c - 1}) = \bigo(1)$ while the stochastic error is proportional to $\sum_{k=1}^K \eta_k^2 = \bigo(K^{2c - 1}) = \bigo(1)$.

\begin{figure}[t]
\vspace{-0.2cm}
\centering
	\begin{subfigure}[b]{0.32\columnwidth}
        \centering
        \includegraphics[width=\columnwidth]{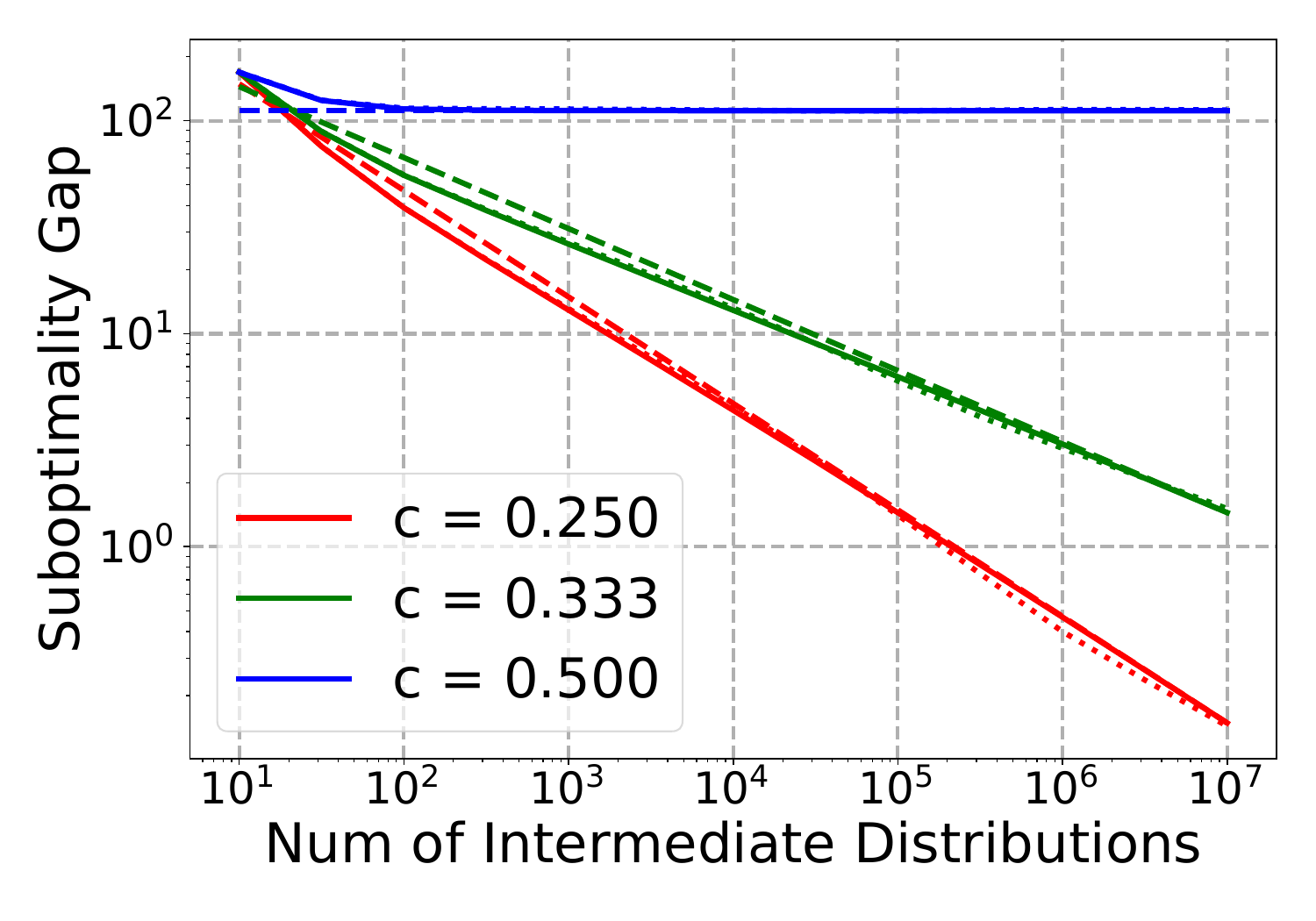}
        \vspace{-0.5cm}
        \caption{Full refreshment}
	\end{subfigure}
	\begin{subfigure}[b]{0.32\columnwidth}
        \centering
        \includegraphics[width=\columnwidth]{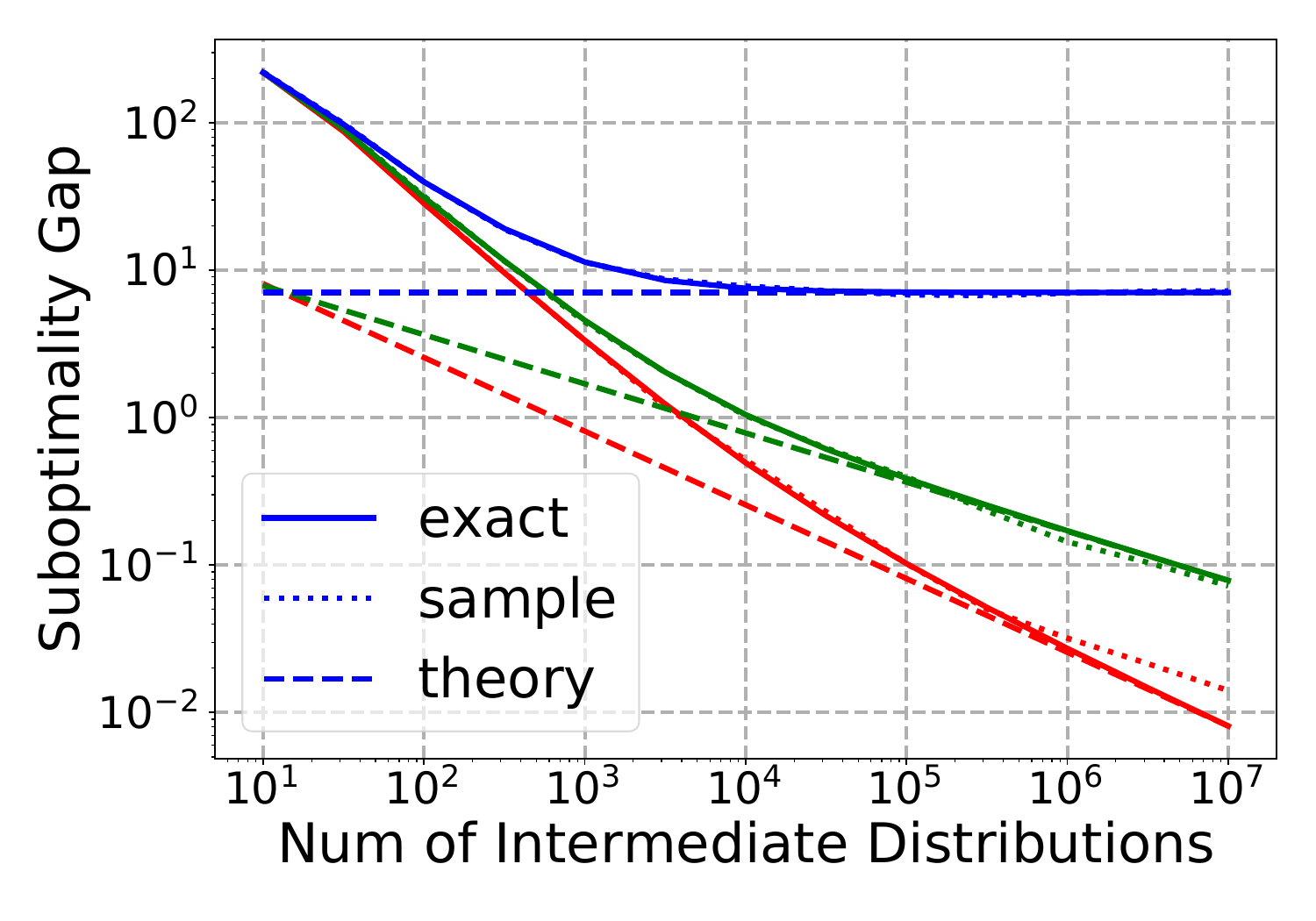}
        \vspace{-0.5cm}
        \caption{Partial refreshment}
	\end{subfigure}
	\begin{subfigure}[b]{0.32\columnwidth}
        \centering
        \includegraphics[width=\columnwidth]{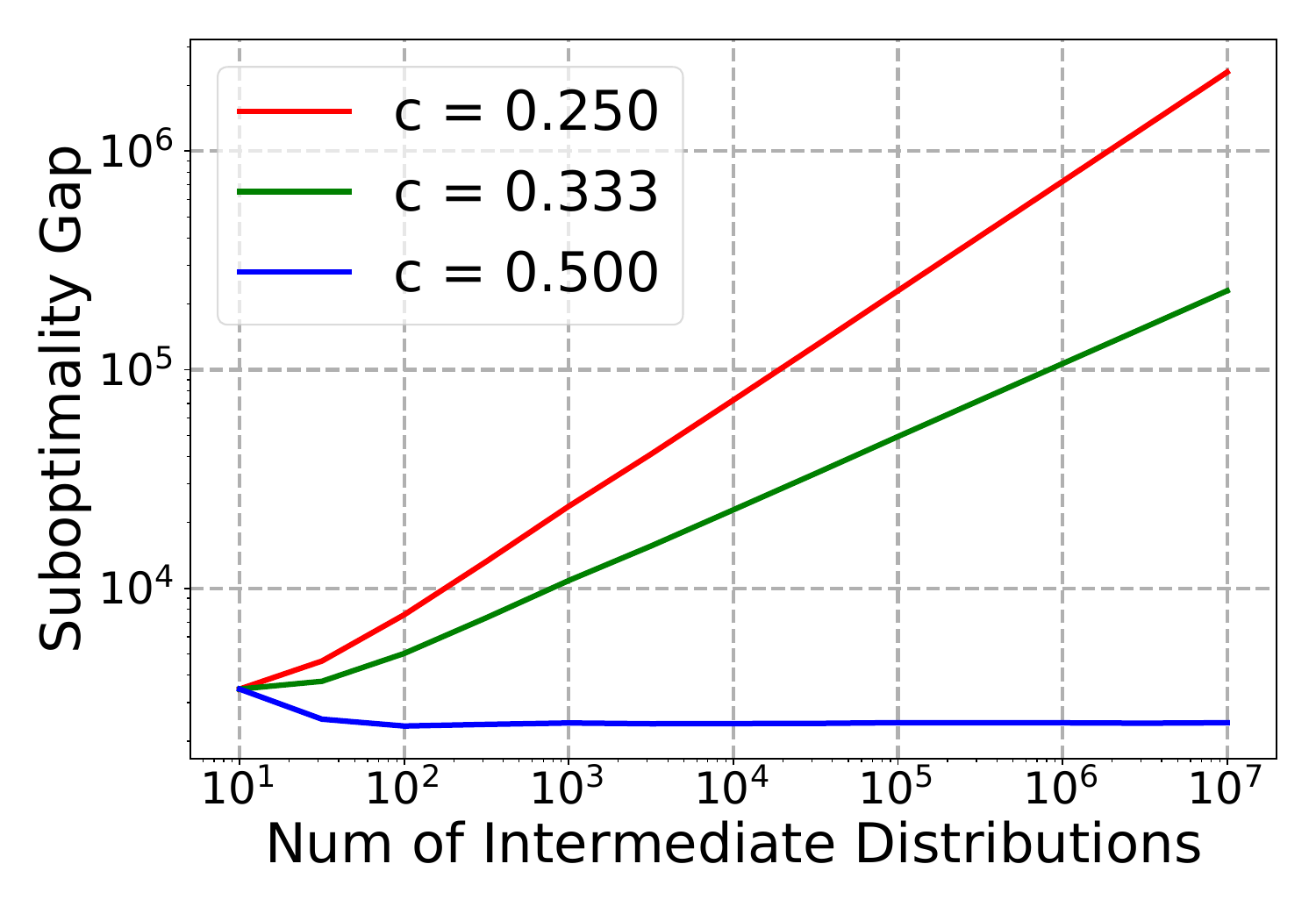}
        \vspace{-0.5cm}
        \caption{Mini-batch gradient}\label{subfig:mb}
	\end{subfigure}
	\caption{Gap between true log ML and our DAIS bound as a function of number of intermediate distributions. Solid lines are exact computation of our DAIS bound; dotted lines are sample-based simulation (Monte Carlo method with $100$ samples); dashed lines are theoretical predictions based on Theorem~\ref{thm:full-batch} with slope $2c - 1$. For the rightmost figure, we use a batch size of $100$.}
	\vspace{-0.1cm}
	\label{fig:simulation}
\end{figure}
\vspace{-0.1cm}
\subsection{Variational Autoencoder}
\vspace{-0.1cm}
We compare the performance of DAVI to vanilla VAE~\citep{kingma2013auto} and IWAE~\citep{burda2016importance} on density modeling tasks. We use the dynamically binarized MNIST~\citep{lecun1998gradient} dataset. We use the same architecture as in IWAE paper. 
The prior $p(\latent)$ is a $50$-dimensional standard Gaussian distribution. The conditional distributions $p(\inputs_i | \latent)$ are independent Bernoulli, with the decoder parameterized by two hidden layers, each with $200$ tanh units. 
The variational posterior $q(\latent|\inputs)$ is also a $50$-dimensional Gaussian with diagonal covariance, whose mean and variance are both parameterized by two hidden layers with $200$ tanh units (see other details in Appendix~\ref{app:imp-train}).

In the first set of experiments, we investigate the effect of number of intermediate distributions $K$ and combine it with importance sampling (as done in IWAE) with $S$ samples in DAVI. To be specific, we define the bound as follows:
\begin{equation}
      \log \frac{1}{S}\sum_{i=1}^S \left(\frac{p_{\params}(\inputs, \latent_K^i)}{q_\phi(\latent_0^i|\inputs)} \prod_{k=1}^K \frac{\pi(\hat{\mom}_k^i)}{\pi(\mom_{k-1}^i)}\right)  ,
\end{equation}
where we sample $(\latent_0^i, \mom_0^i, \hat{\mom}_1^i, \dots)$ independently from $q_\text{fwd}$. By default, we use partial momentum refreshment with $\gamma = 0.9$ and equally spaced annealing parameters $\beta_k = k/K$.
We compare it to vanilla VAE and IWAE bounds with $S \times K$ samples. As shown in Table~\ref{tab:vae}, increasing $K$ gives strictly better models with lower test negative log-likelihood. However, IWAE achieves slightly better performance with roughly the same computation if $S \times K$ is small. On the other hand, DAVI is more effective with more compute budget (i.e., $S \times K$ is large) and eventually outperforms IWAE. 

\begin{table}
\vspace{-0.1cm}
\centering
\setlength{\tabcolsep}{7pt}
\caption{Test negative log-likelihood of the trained model, estimated using AIS with 10,000 intermediate distribution and 10 particles. For VAE/IWAE, we used $S \times K$ samples. The numbers reported are averaged over three runs. The standard deviations are fairly small over three runs ($< 0.06$). }
\label{tab:vae}
\resizebox{0.9\textwidth}{!}{%
\begin{tabular}{@{} l@{}c@{}c@{}c@{}c@{}c@{}c@{}c@{}}\toprule
\multirow{2}{*}{Objective} & \multicolumn{1}{c}{$S \times K = 1$} & \multicolumn{1}{c}{$S \times K = 5$} & \multicolumn{3}{c}{$S \times K = 50$} &\multicolumn{2}{c}{$S \times K = 500$} \\
\cmidrule(lr){2-2} \cmidrule(lr){3-3} \cmidrule(lr){4-6} \cmidrule(l){7-8}
& $\;K=1\;$ & $\;K=5\;$ & $\;K=5\;$ & $\;K=10\;$ & $\;K=50\;$  & $\;K=10\;$ & $\;K=50\;$ \\
\midrule
VAE & 86.93 & 86.95 & & 86.94 & & \multicolumn{2}{c}{86.89}  \\
IWAE & 86.93 & \textbf{85.43} & & 84.46 & & \multicolumn{2}{c}{83.87}  \\
DAVI & - & 86.51 & 84.49 & 84.45 & 85.23 & 83.62 & 83.65 \\
DAVI (adapt) & - & 86.49 & 84.42 & \textbf{84.39} & 85.00 & \textbf{83.56} & 83.69 \\
\bottomrule
\end{tabular}
}
\end{table}
In the second set of experiments, we learn the annealing scheme of DAVI together with the parameters of encoder and decoder. Comparing the third and fourth rows of Table~\ref{tab:vae}, one can see that learning the annealing scheme improves the performance slightly.

\begin{figure}[t]
\centering
    \begin{subfigure}[b]{0.32\textwidth}
        \centering
        \includegraphics[width=0.98\columnwidth]{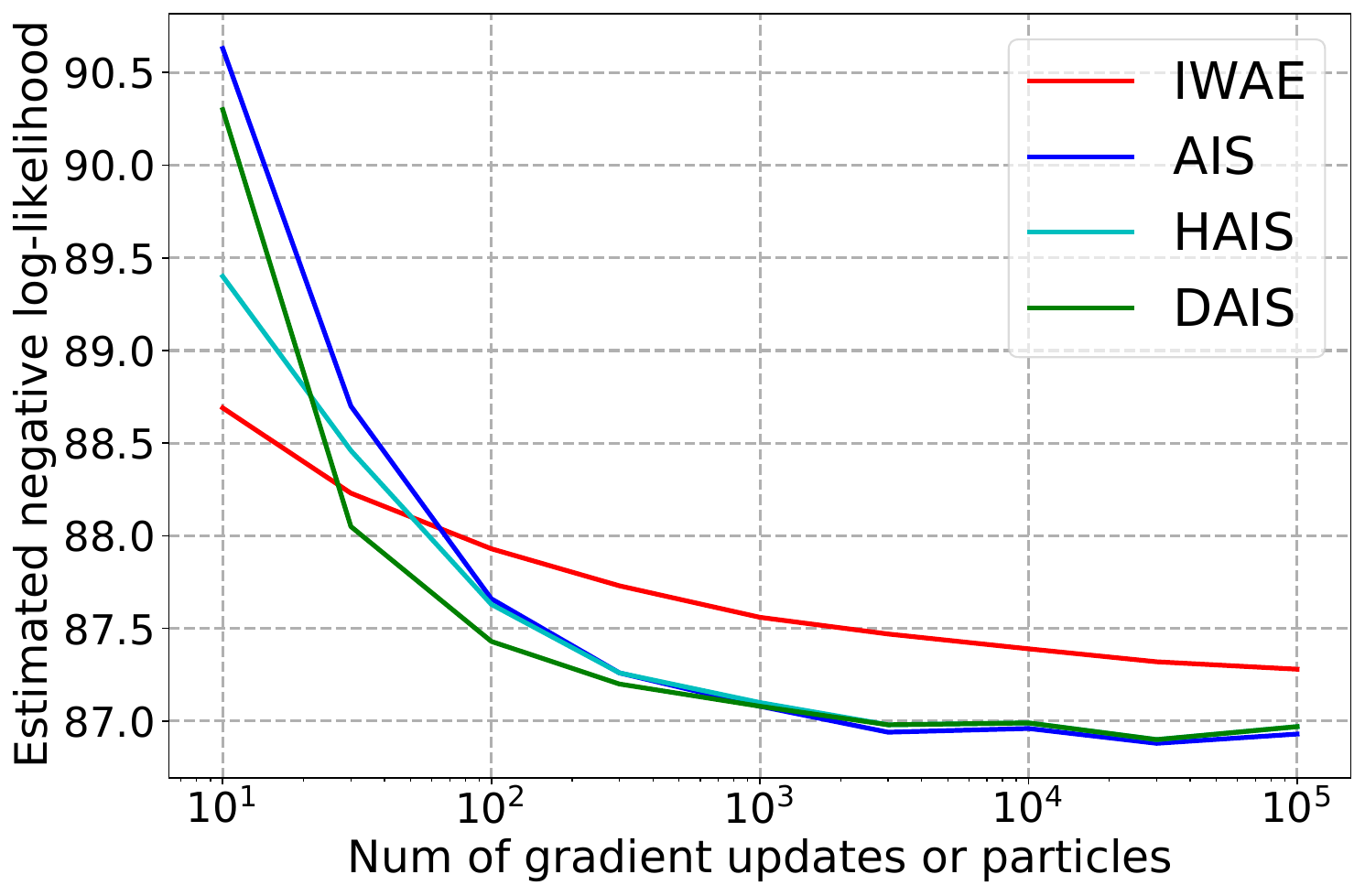}
        \vspace{-0.2cm}
        \caption{standard VAE}
	\end{subfigure}
	\begin{subfigure}[b]{0.32\textwidth}
        \centering
        \includegraphics[width=0.98\columnwidth]{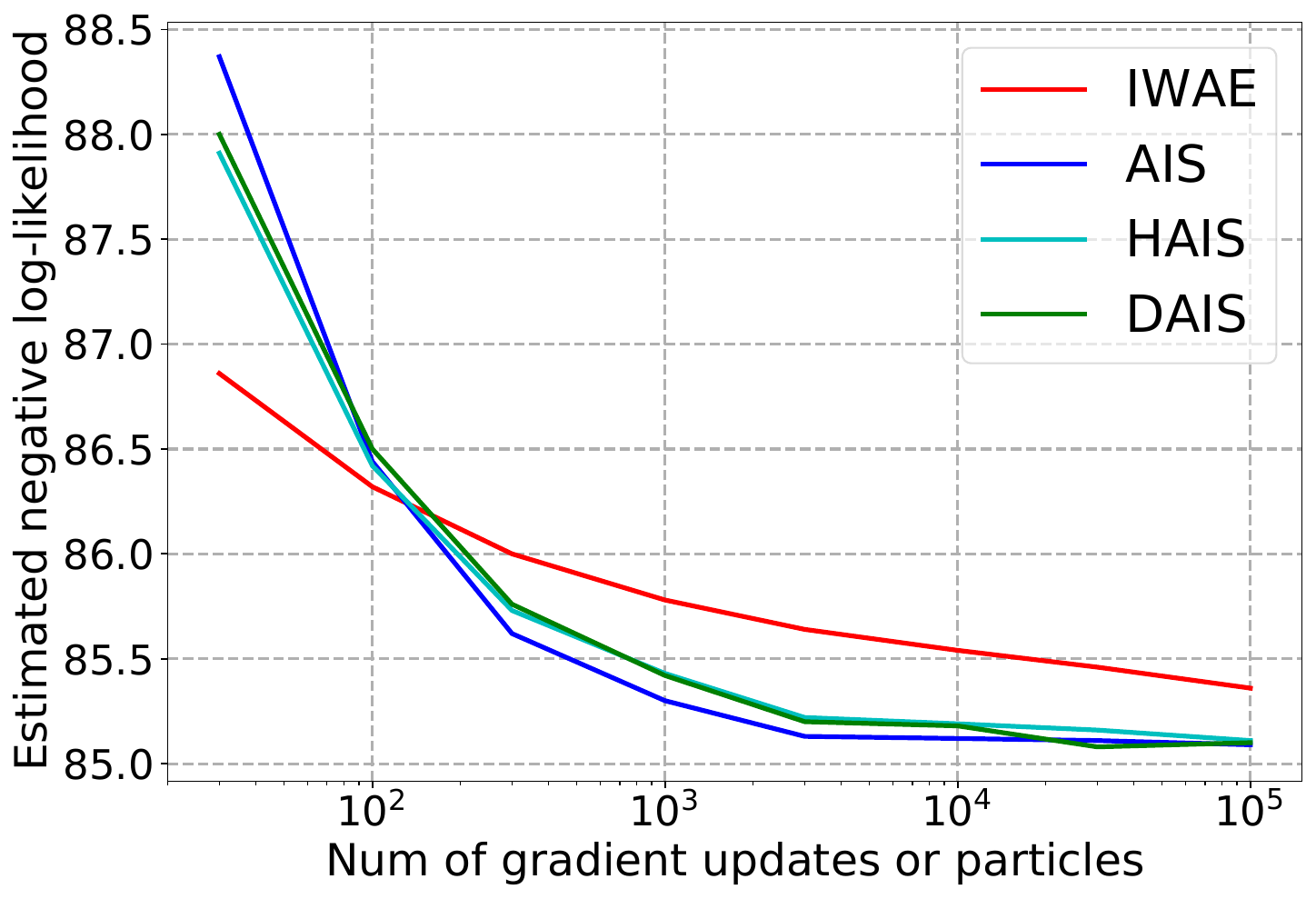}
        \vspace{-0.2cm}
        \caption{IWAE with $S = 10$}
	\end{subfigure}
	\begin{subfigure}[b]{0.32\textwidth}
        \centering
        \includegraphics[width=0.98\columnwidth]{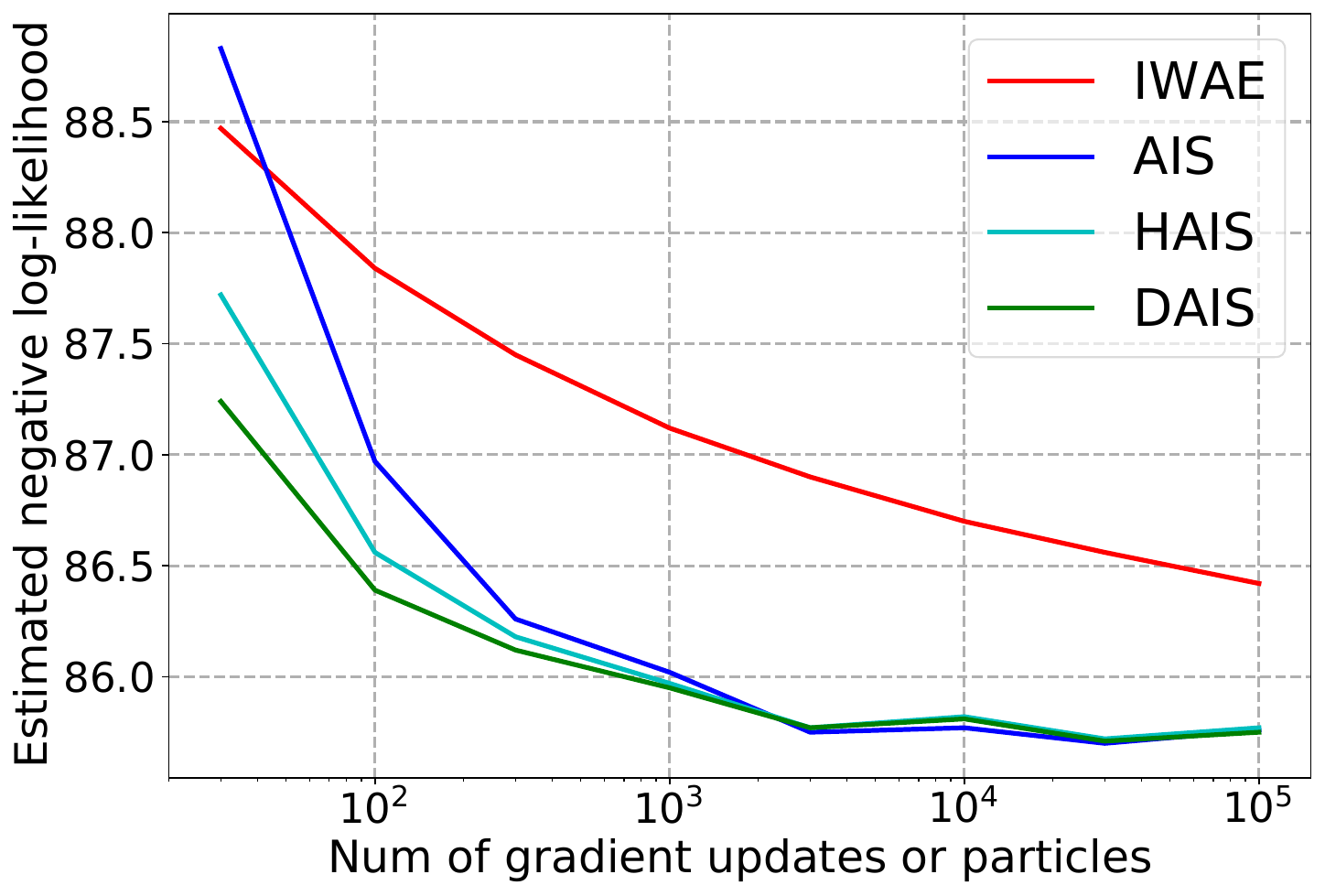}
        \vspace{-0.2cm}
        \caption{DAIS with $K = 10$}
	\end{subfigure}
	\vspace{-0.1cm}
	\caption{Results of different algorithms in evaluating VAE-, IWAE- or DAIS-trained models. DAIS performs on par with AIS/HAIS but without requiring the MH correction steps.}
	\label{fig:eval}
\end{figure}

Lastly, we also compare our algorithm with IWAE, AIS, and Hamiltonian AIS (HAIS)\footnote{HAIS is an algorithm similar to ours which combines Hamiltonian dynamics with AIS, but with accept-reject steps included.}~\citep{sohl2012hamiltonian} in evaluating the log-likelihood of trained models. Notably, IWAE and AIS have been widely used in VAE evaluation, see e.g.~\citet{wu2016quantitative, huang2020evaluating}. For HAIS and DAIS, we employ the optimal step-size scaling scheme derived in Theorem~\ref{thm:full-batch} with $c = 1/4$ and only tune the step size for the case of $K = 10$. For all implementation details, please see Appendix~\ref{app:imp-eval}.
In particular, we compare the evaluation algorithms on models trained using the VAE, IWAE, and DAIS objectives.
In Figure~\ref{fig:eval}, we report the estimated negative log-likelihood as a function of the number of particles (for IWAE) or gradient updates (for AIS, HAIS and DAIS). Interestingly, we observe that IWAE performs better when we have limited computation and AIS/HAIS/DAIS win out by a big margin if we increase $K$. 
Indeed, it is known that naive importance sampling can have exponential sample complexity in some problem parameters (e.g. the dimension), and that AIS can overcome this to efficiently give accurate results (see, for instance, the analysis in \citet{neal2001annealed}). This may explain the superior performance of AIS-based algorithms when more compute is used.
In addition, we observe that DAIS performs on par with AIS/HAIS but without requiring the MH correction steps.

\vspace{-0.1cm}
\section{Conclusion}
\vspace{-0.1cm}
In this paper, we proposed a differentiable AIS (DAIS) algorithm for marginal likelihood estimation.
We provided a detailed convergence analysis for Bayesian linear regression which goes beyond existing analyses. Using this, we proved a sublinear convergence rate of DAIS in the full-batch setting. However, we showed that DAIS is inconsistent when mini-batch gradients are used due to a fundamental incompatibility between the goals of last-iterate convergence to the posterior and elimination of the pathwise stochastic error. This comprises an interesting counterexample to the general trend of algorithms consistent in the deterministic setting remaining consistent in the stochastic setting. Our negative result helps explain the difficulty of developing practically effective AIS-like algorithms that exploit mini-batch gradients. Our numerical experiments validate our claims.

\vspace{-0.1cm}
\section*{Acknowledgements}
\vspace{-0.1cm}
We thank Ricky Tian Qi Chen, Murat A. Erdogdu, Sergey Levine, Xuechen Li, Mufan Li, Chris J. Maddison, and Matthew D. Hoffman for many helpful discussions. In particular, we thank Xuechen Li for his suggestion regarding a memory-efficient implementation.
We also thank Shengyang Sun, Xuechen Li and Yangjun Ruan for detailed comments on early drafts, and Radford M. Neal for insightful feedback on our arXiv preprint. 
Finally, we thank the anonymous NeurIPS reviewers for their useful feedback on earlier versions of this manuscript. 

GZ was supported by Ontario Graduate Fellowship. KH was supported by a Sequoia Capital Stanford Graduate Fellowship. 
RG was supported by the CIFAR AI Chairs program. Resources used in preparing this research were provided, in part, by the Province of Ontario, the Government of Canada through CIFAR, and companies sponsoring the Vector
Institute.

\newpage
\bibliography{reference.bib}
\bibliographystyle{plainnat}

\newpage
\appendix

\section{Proofs}

\lemfirst*

\begin{proof}\label{proof:lag}
We will prove the result for both quantities by induction. We assume $\mean_p = \zero$ and $\precs_p \succeq \iden$ without loss of generality and choose $\eta = \frac{a}{K^c}$. Recall definitions
\begin{align}
    \precs_\text{lld} &= \frac{1}{\sigma^2} \inputMat^\top \inputMat \\
    \precs_\text{pos}^{\beta_k} &= \precs_p + \beta_k \precs_\text{lld} 
    \\
    \mean_* &= \left(\inputMat^\top\inputMat\right)^{-1}\inputMat^\top \targets \\
    \mean_\text{pos}^{\beta_k} &= \left(\precs_\text{pos}^{\beta_k}\right)^{-1} \left( \precs_p \mean_p + \beta_k \precs_\text{lld} \mean_* \right) .
\end{align}
We first bound intermediate quantities of interest. In particular, we have
\begin{equation}\label{eq:consec-cov}
    \begin{aligned}
    \|\cov_\text{pos}^{\beta_{k-1}} - \cov_\text{pos}^{\beta_{k}} \|_2 &\leq \|\cov_\text{pos}^{\beta_{k-1}}\|_2\|\precs_\text{pos}^{\beta_{k-1}} - \precs_\text{pos}^{\beta_{k}} \|_2\|\cov_\text{pos}^{\beta_k}\|_2 \\
    &= (\beta_{k} - \beta_{k-1}) \|\cov_\text{pos}^{\beta_{k-1}}\|_2 \|\precs_\text{lld}\|_2 \|\cov_\text{pos}^{\beta_k}\|_2 \\
    &= \frac{C_3}{K}
    \end{aligned}
\end{equation}
and
\begin{equation}\label{eq:lemma_1_proof}
\begin{aligned}
    \| \mean_\text{pos}^{\beta_{k-1}} - \mean_\text{pos}^{\beta_k}\|_2 
    &= \|\cov_\text{pos}^{\beta_{k-1}} \beta_{k-1}\precs_\text{lld} \mean_* - 
    \cov_\text{pos}^{\beta_k} \beta_k\precs_\text{lld} \mean_*\|_2 \\
    &\leq \|(\beta_k\cov_\text{pos}^{\beta_k} 
    - \beta_{k-1}\cov_\text{pos}^{\beta_{k-1}})\|_2 \|\precs_\text{lld}\mean_*\|_2 \\
    &= \|\beta_k(\cov_\text{pos}^{\beta_k}-\cov_\text{pos}^{\beta_{k-1}})
    + (\beta_k-\beta_{k-1})\cov_\text{pos}^{\beta_{k-1}}\|_2 \|\precs_\text{lld}\mean_*\|_2 \\
    &\leq \left( \beta_k\|\cov_\text{pos}^{\beta_k}-\cov_\text{pos}^{\beta_{k-1}}\|_2 +
    (\beta_k-\beta_{k-1})\|\cov_\text{pos}^{\beta_{k-1}}\|_2 \right)
    \|\precs_\text{lld}\mean_*\|_2 \\
    &= \left(\beta_k \frac{C_3}{K} + (\beta_k-\beta_{k-1})\|\cov_\text{pos}^{\beta_{k-1}}\|_2 \right) \|\precs_\text{lld}\mean_*\|_2 \\
    &= \frac{C_1}{K}
\end{aligned}
\end{equation}

We now begin the induction for $\|\mean_{k-1} - \mean_\text{pos}^{\beta_k}\|_2$. For $k=1$, we have $\mean_0 = \mean_p$ and obtain
\begin{equation}
    \| \mean_0 - \mean_\text{pos}^{\beta_1}\|_2 = \| \mean_\text{pos}^{\beta_0} - \mean_\text{pos}^{\beta_{1}}\|_2 = \bigo(K^{-1}).
\end{equation}
Next, we assume $\|\mean_{k-1} - \mean_\text{pos}^{\beta_k} \| = C_2 K^{2c - 1} =  \bigo(K^{2c-1})$ holds for $k \geq 1$. Subtracting $\mean_\text{pos}^{\beta_k}$ from both sides of \eqref{eq:dais_update_mean_params} yields
\begin{equation}
\begin{aligned}
    \mean_k - \mean_\text{pos}^{\beta_k}
    &= \left( \iden - \frac{\eta_k^2}{2} \precs_\text{pos}^{\beta_k} \right) (\mean_{k-1} - \mean_\text{pos}^{\beta_k}).
\end{aligned}
\end{equation}
As $\precs_\text{pos}^{\beta_k} \succeq \precs_p \succeq \iden$ by construction, we have
\begin{equation}
\begin{aligned}
    \|\mean_k - \mean_\text{pos}^{\beta_{k+1}} \|_2 &\leq \|\mean_k - \mean_\text{pos}^{\beta_{k}} \|_2 + \|\mean_\text{pos}^{\beta_{k}} - \mean_\text{pos}^{\beta_{k+1}} \|_2 \\
    &\leq (1 - \frac{\eta^2}{2}) \|\mean_{k-1} - \mean_\text{pos}^{\beta_k}\|_2 + \frac{C_1}{K} \\
    & = C_2 K^{2c - 1} - \frac{a^2 C_2}{2} K^{-1} + C_1 K^{-1}
\end{aligned}
\end{equation}
We have the flexibility to choose $a$ such that $a^2 C_2 \geq 2 C_1$, hence we have $\|\mean_k - \mean_\text{pos}^{\beta_{k+1}} \|_2 \leq C_2 K^{2c - 1}$.
This completes the proof for $\|\mean_{k-1} - \mean_\text{pos}^{\beta_k} \|_2$.

Now, we bound $\|\precs_{k-1} - \precs_\text{pos}^{\beta_k} \|_2$. It suffices to prove that $\|\cov_{k-1} - \cov_\text{pos}^{\beta_k} \|_2 = \bigo(K^{2c - 1})$ because $\|\precs_{k-1} - \precs_\text{pos}^{\beta_k} \|_2 \leq \|\precs_{k-1}\|_2 \|\cov_{k-1} - \cov_\text{pos}^{\beta_k} \|_2 \|\precs_\text{pos}^{\beta_k} \|_2$.
For $k=1$, we have $\cov_0 = \cov_p$ and obtain
\begin{equation}
    \| \cov_0 - \cov_\text{pos}^{\beta_1}\|_2 = \| \cov_\text{pos}^{\beta_0} - \cov_\text{pos}^{\beta_1}\|_2 = \bigo(K^{-1})
\end{equation}
by \eqref{eq:consec-cov}.
Next, we assume $\|\cov_{k-1} - \cov_\text{pos}^{\beta_k} \|_2 \leq C_4 K^{2c - 1}$ for $k \geq 1$. Subtracting $\cov_\text{pos}^{\beta_{k+1}}$ from both sides of \eqref{eq:cov-recur} yields
\begin{multline}
    \cov_k - \cov_\text{pos}^{\beta_{k+1}} = \left(\iden - \frac{\eta^2}{2}\precs_\text{pos}^{\beta_k}\right)(\cov_{k-1} - \cov_\text{pos}^{\beta_{k}})\left(\iden - \frac{\eta^2}{2}\precs_\text{pos}^{\beta_k}\right) \\ + \cov_\text{pos}^{\beta_{k}} - \cov_\text{pos}^{\beta_{k+1}} -
    \frac{\eta^4}{4}\precs_\text{pos}^{\beta_k} +
    \frac{\eta^6}{16}(\precs_\text{pos}^{\beta_k})^2.
\end{multline}
By invoking $\precs_\text{pos}^{\beta_k} \succeq \iden$, we have
\begin{equation}
\begin{aligned}
    \|\cov_k - \cov_\text{pos}^{\beta_{k+1}}\|_2 &\leq \left(1 - \frac{\eta^2}{2}\right)^2\|\cov_{k-1} - \cov_\text{pos}^{\beta_{k}}\|_2 + \|\cov_\text{pos}^{\beta_{k}} - \cov_\text{pos}^{\beta_{k+1}} \|_2 + \frac{\eta^4}{4}\left\| \precs_\text{pos}^{\beta_k} - \frac{\eta^2}{4}(\precs_\text{pos}^{\beta_k})^2 \right\|_2 \\
    & \leq C_4 K^{2c-1} - a^2 C_4 K^{-1} + \frac{a^4 C_4}{4}K^{-2c-1} + C_3 K^{-1} + a^4 C_5 K^{-4c}.
\end{aligned}
\end{equation}
To finish the proof, we can choose $a$ and $C_4$ to ensure $a^2 C_4 - C_3 - a^4 C_5 -\frac{a^4 C_4}{4} > 0$.
\end{proof}

\thmfirst*

\begin{proof}
By Lemma~\ref{lem:key-lemma}, we have the first two terms in \eqref{eq:gap} upper bounded as follows,
\begin{equation}
\begin{aligned}
    \left|\frac{1}{2} \|\mean_K - \mean_\text{pos}\|_{\precs_\text{pos}}^2 + \frac{1}{2}\trace(\precs_\text{pos}\cov_K) - \frac{d}{2}\right| &\leq \frac{1}{2} \|\mean_K - \mean_\text{pos}\|_{\precs_\text{pos}}^2  + \frac{1}{2}\left|\trace(\precs_\text{pos} (\cov_K - \cov_\text{pos}))\right| \\ &= \bigo(K^{4c-2}) + \bigo(K^{2c-1}) = \bigo(K^{2c-1})
\end{aligned}
\end{equation}
where we use the identity that $|\trace(\mathbf{A}\mathbf{B})| \leq d \|\mathbf{A}\|_2 \|\mathbf{B}\|_2$.

Now, it suffices to show the term $\circled{3}$ vanishes as $K \rightarrow \infty$. To put it differently, we only need to show that $\lim_{K\rightarrow\infty}\expect_{q} \left[ \sum_{k=1}^K  \log \frac{\pi(\hat{\mom}_k)}{\pi(\mom_{k-1})} \right] = \frac{1}{2}\log \frac{|\cov_\text{pos}|}{|\cov_p|}$. To this end, we could first simplify $\expect_{q} \left[ \sum_{k=1}^K  \log \frac{\pi(\hat{\mom}_k)}{\pi(\mom_{k-1})} \right]$ as we know $\hat{\mom}_k$ follows a Gaussian distribution $\normal(\mean_k^\mom, \cov_k^\mom)$. 
\begin{equation}
    \expect_{q} \left[ \sum_{k=1}^K  \log \frac{\pi(\hat{\mom}_k)}{\pi(\mom_{k-1})} \right] = \sum_{k=1}^K \left[-\frac{1}{2}\|\mean_k^\mom\|_2^2 - \frac{1}{2} \trace(\cov_k^\mom) + \frac{d}{2} \right]
\end{equation}
According to~\eqref{eq:updates}, we have $\mean_k^\mom = - \eta \precs_\text{pos}^{\beta_k}(\mean_k - \mean_\text{pos}^{\beta_k})$. Then by Lemma~\ref{lem:key-lemma}, we obtain $\|\mean_k^\mom\|_2^2 = \bigo(K^{2c-2})$ and $\sum_{k=1}^K [\frac{1}{2}\|\mean_k^\mom\|_2^2] = \bigo(K^{2c-1})$. Again, this term would vanish if we choose $c < \frac{1}{2}$. From \eqref{eq:mom-cov-recur}, we can write $\trace(\cov_k^\mom)$ as
\begin{equation}\label{eq:cov-v}
\begin{aligned}
    \trace(\cov_k^\mom) &= \trace\left(\eta^2 \precs_\text{pos}^{\beta_k} \cov_{k-1} \precs_\text{pos}^{\beta_k} + (\iden - \frac{\eta^2}{2}\precs_\text{pos}^{\beta_k})^2\right) \\
    &= \eta^2 \trace\left(\precs_\text{pos}^{\beta_k} \cov_{k-1} (\precs_\text{pos}^{\beta_k} - \precs_{k-1})\right) + d + \frac{\eta^4}{4}\trace\left( (\precs_\text{pos}^{\beta_k})^2\right)
\end{aligned}
\end{equation}
In addition, we have the recurrence for $\cov_k$:
\begin{equation}
    \cov_k = \left(\iden - \frac{1}{2} \eta^2 \precs_\text{pos}^{\beta_k}\right)\cov_{k-1}\left(\iden - \frac{1}{2} \eta^2 \precs_\text{pos}^{\beta_k}\right) + \eta^2 \left(\iden - \frac{1}{2} \eta^2 \precs_\text{pos}^{\beta_k} + \frac{\eta^4}{16} (\precs_\text{pos}^{\beta_k})^2\right)
\end{equation}
This immediately leads to
\begin{equation}\label{eq:cov-recur-sim}
\begin{aligned}
    \trace\left((\cov_k - \cov_{k-1})\precs_\text{pos}^{\beta_k}\right) &= - \eta^2 \trace\left( \precs_\text{pos}^{\beta_k} \cov_{k-1} (\precs_\text{pos}^{\beta_k} - \precs_{k-1}) \right) \\
    &+ \frac{\eta^4}{4} \trace\left(\precs_\text{pos}^{\beta_k} \cov_{k-1} (\precs_\text{pos}^{\beta_k})^2\right) - \frac{\eta^4}{2}\trace\left((\precs_\text{pos}^{\beta_k})^2\right) + \frac{\eta^6}{16} \trace\left((\precs_\text{pos}^{\beta_k})^3\right)
\end{aligned}
\end{equation}
where we used the identity $\trace(\mathbf{A}\mathbf{B}) = \trace(\mathbf{B}\mathbf{A})$.
Plugging \eqref{eq:cov-recur-sim} back into \eqref{eq:cov-v}, we have
\begin{equation}
    \trace(\cov_k^\mom) = d - \trace\left((\cov_k - \cov_{k-1})\precs_\text{pos}^{\beta_k}\right) + \frac{\eta^4}{4} \trace\left((\precs_\text{pos}^{\beta_k} - \precs_{k-1}) \cov_{k-1}(\precs_\text{pos}^{\beta_k})^2\right) + \frac{\eta^6}{16} \trace\left((\precs_\text{pos}^{\beta_k})^3\right)
\end{equation}
Further, if we sum over all timesteps, we get
\begin{multline}
    \sum_{k=1}^K \left[-\frac{1}{2}\trace(\cov_k^\mom) + \frac{d}{2}\right] = \\ \underbrace{\frac{1}{2} \sum_{k=1}^K \left[\trace\left((\cov_k - \cov_{k-1}) \precs_{k-1}  \right)\right]}_{=\frac{1}{2}\log \frac{|\cov_K|}{|\cov_p|} + \bigo(K^{-1})} + \underbrace{\frac{1}{2}\sum_{k=1}^K\left[\trace \left((\cov_k - \cov_{k-1})(\precs_\text{pos}^{\beta_k} - \precs_{k-1}) \right)\right]}_{=\bigo(K^{2c-1})} \\
     - \underbrace{\frac{\eta^4}{8} \sum_{k=1}^K \left[ \trace\left((\precs_\text{pos}^{\beta_k} - \precs_{k-1}) \cov_{k-1}(\precs_\text{pos}^{\beta_k})^2\right) \right]}_{=\bigo(K^{-2c})} - \underbrace{\frac{\eta^6}{32} \sum_{k=1}^K\left[ \trace\left((\precs_\text{pos}^{\beta_k})^3\right) \right]}_{=\bigo(K^{-6c+1})}
\end{multline}
For the first term, we used the Riemann sum approximation for the integral 
\begin{equation}
    \int_{\cov_p}^{\cov_K} \trace(\cov^{-1}d\cov) = \int_{\cov_p}^{\cov_K} d \log |\cov| = \log |\cov_K| - \log|\cov_p|.
\end{equation}
Importantly, the integral is independent of the path $\cov(t)$ and we could choose the path to go through all $\cov_k$. By the same argument of Riemann sum, the approximation error is bounded by $\bigo\left(\sum_{k=1}^K \|\cov_k - \cov_{k-1}\|_2^2\right)$. By \eqref{eq:cov-recur}, one can show that $\| \cov_k - \cov_{k-1}\|_2 = \bigo(K^{-1})$, so we have the approximation error $\bigo(K^{-1})$.
For the second term, we used that fact that $|\trace(\mathbf{A}\mathbf{B})| \leq d\| \mathbf{A}\|_2 \|\mathbf{B}\|_2$.
Therefore, the whole term will converge to $\frac{1}{2}\log \frac{|\cov_K|}{|\cov_p|}$ if $\frac{1}{4} \leq c < \frac{1}{2}$. Finally, by Lemma~\ref{lem:key-lemma}, we have
\begin{equation}
    \log \frac{|\cov_K|}{|\cov_p|} = \log \frac{|\cov_\text{pos}|}{|\cov_p|} + \log \frac{|\precs_\text{pos}|}{|\precs_K|} = \log \frac{|\cov_\text{pos}|}{|\cov_p|} + \bigo(K^{2c - 1})
\end{equation}
Hence, we have $\circled{3} = \bigo(K^{2c-1})$ if $c \geq \frac{1}{4}$. Therefore, we have
\begin{equation}
    \log p(\data) - \elbo_\text{DAIS} = \circled{1} + \circled{2} + \circled{3} = \bigo(K^{2c-1}).
\end{equation}
This completes the proof.
\end{proof}

\thmsecond*

\begin{proof}
Recall the suboptimality gap
\begin{multline}\label{eq:gap-app}
    \log p(\data) - \elbo_\text{DAIS} = \\ \underbrace{\frac{1}{2} \|\mean_K - \mean_\text{pos}\|_{\precs_\text{pos}}^2}_{\circled{1}}  + \underbrace{\frac{1}{2}\trace(\precs_\text{pos}\cov_K) - \frac{d}{2}}_{\circled{2}}  + \underbrace{\frac{1}{2}\log \frac{|\cov_\text{pos}|}{|\cov_p|} - \expect_{q} \left[ \sum_{k=1}^K  \log \frac{\pi(\hat{\mom}_k)}{\pi(\mom_{k-1})} \right]}_{\circled{3}}.
\end{multline}
First, we note that $\circled{1}$ would stay unchanged for DAIS with stochastic gradient because $\mean_K$ is independent of gradient noise. Second, the term $\circled{2}$ would only become larger if we use stochastic gradient becuase more noise is injected into the system. It is easy for one to show that $\hat{\cov}_k \succeq \cov_k$ where $\hat{\cov}_k$ is the covariance of $\params_k$ when stochastic gradient is used ($\cov_k$ is the covariance of $\params_k$ in the full-batch setting). 
We will prove this by induction. By \eqref{eq:sto-updates} and \eqref{eq:updates}, we have 
\begin{align}
    \hat\cov_k &= (\iden - \frac{\eta^2}{2}  \precs_\text{pos}^{\beta_k})\hat\cov_{k-1}(\iden - \frac{\eta^2}{2} \precs_\text{pos}^{\beta_k}) + \eta^2 (\iden - \frac{\eta^2}{2} \precs_\text{pos}^{\beta_k} + \frac{\eta^4}{16} (\precs_\text{pos}^{\beta_k})^2) + 
    \frac{\eta^4}{4}\cov_\noise \\
    \cov_k &= (\iden - \frac{\eta^2}{2} \precs_\text{pos}^{\beta_k})\cov_{k-1}(\iden - \frac{\eta^2}{2} \precs_\text{pos}^{\beta_k}) + \eta^2 (\iden - \frac{\eta^2}{2} \precs_\text{pos}^{\beta_k} + \frac{\eta^4}{16} (\precs_\text{pos}^{\beta_k})^2)
\end{align} \\
At $k=1$, we know $\hat\cov_0=\cov_0=\cov_p$, so we retrieve the base case $\hat\cov_1 - \cov_1 = \frac{\eta^4}{4}\cov_\noise \succeq 0$. Now assume $\hat\cov_{k-1} \succeq \cov_{k-1}$, then 
\begin{equation}
    \hat\cov_k - \cov_k 
    = \underbrace{(\iden - \frac{\eta^2}{2}\precs_\text{pos}^{\beta_k})(\hat\cov_{k-1} - \cov_{k-1})(\iden - \frac{\eta^2}{2} \precs_\text{pos}^{\beta_k})}_{\succeq 0} +
    \frac{\eta^4}{4}\cov_\noise \\
    \succeq 0
\end{equation}
which completes the induction. 

Therefore, we only need to compare $\circled{3}$ of the stochastic gradient variant to its deterministic counterpart. It suffices to show that $\circled{3}$ becomes larger once we use stochastic gradient for the updates. To this end, we let $\tilde{\cov}_k^\mom$ to be the covariance of $\hat{\mom}_k$ and have the following recursion (by \eqref{eq:sto-updates})
\begin{equation}
    \tilde{\cov}_k^\mom = \eta_k^2 \precs_\text{pos}^{\beta_k} \hat{\cov}_{k-1} \precs_\text{pos}^{\beta_k} + (\iden - \frac{\eta_k^2}{2}\precs_\text{pos}^{\beta_k})^2 + \eta_k^2 \cov_{\noise}.
\end{equation} 
For notational convenience, we let $\hat{\cov}_k^\mom \triangleq \eta_k^2 \precs_\text{pos}^{\beta_k} \hat{\cov}_{k-1} \precs_\text{pos}^{\beta_k} + (\iden - \frac{\eta_k^2}{2}\precs_\text{pos}^{\beta_k})^2$.    
Here, we used $\tilde{\cov}_k^\mom$, $ \hat{\cov}_k^\mom$ and $\hat{\cov}_k$ to avoid confusion with $\cov_k^\mom$ and $\cov_k$ in the full-batch setting.
For convenience, we further assume $\noise$ is Gaussian (appealing to the central limit theorem) and get
\begin{equation}
    \expect_{q} \left[ \sum_{k=1}^K  \log \frac{\pi(\hat{\mom}_k)}{\pi(\mom_{k-1})} \right] = \sum_{k=1}^K \left[-\frac{1}{2}\|\mean_k^\mom\|_2^2 - \frac{1}{2} \trace(\hat{\cov}_k^\mom) + \frac{d}{2}  \right] - \sum_{k=1}^K \left[\frac{1}{2}\eta_k^2 \trace(\cov_\noise) \right].
\end{equation}
Importantly, we notice that $\hat{\cov}_k^\mom \succeq \cov_k^\mom$ because $\hat{\cov}_k \succeq \cov_k$. So the suboptimality gap \eqref{eq:gap-app} increases at least by $\sum_{k=1}^K \left[\frac{1}{2}\eta_k^2 \trace(\cov_\noise) \right]$ with stochastic gradient update in DAIS. This immediately implies the necessary condition of DAIS being consistent is 
\begin{equation}\label{eq:req}
    \lim_{K \rightarrow \infty}\sum_{k=1}^K \eta_k^2 = 0,
\end{equation}
where we assume $\trace(\cov_\noise) > 0$. We now show that for $\mean_K$ to converge to posterior mean $\mean_\text{pos}$ (so that $\circled{1}$ vanishes), a major requirement is $\lim_{K \rightarrow \infty}\sum_{k=1}^K \eta_k^2 = \infty$. To prove that, we observe that the mean of $\params_k$ evolves as follows:
\begin{equation}
    \mean_{k} - \mean_\text{pos}^{\beta_k} \leftarrow \left( \iden - \frac{\eta_k^2}{2}\precs_\text{pos}^{\beta_k} \right)(\mean_{k-1} - \mean_\text{pos}^{\beta_k})
\end{equation}
Since $\eta_k$ is $o(1)$ by \eqref{eq:req} and we know $\precs_\text{pos}^{\beta_k}$ is upper bounded by $C\iden$ for some constant $C$. One can show the following by induction:
\begin{equation}
    \|\mean_{K} - \mean_\text{pos}\|_2 \geq \left\|\left(\prod_{k=1}^K \left( 1 - \frac{C\eta_k^2}{2} \right)\right)\left(\mean_{0} - \mean_\text{pos}\right)\right\|_2
\end{equation}
For $\|\mean_{K} - \mean_\text{pos}\|_2 \rightarrow 0$ in the worst case, it requires the following to hold:
\begin{equation}
   \lim_{K \rightarrow \infty} \prod_{k=1}^K \left( 1 - \frac{C \eta_k^2}{2} \right) = 0.
\end{equation}
This is equivalent to
\begin{equation}
    \lim_{K \rightarrow \infty} \sum_{k=1}^K \log \left( 1 - \frac{C \eta_k^2}{2} \right) \stackrel{\eta_k = o(1)}{\approx} - \lim_{K \rightarrow \infty} \sum_{k=1}^K \frac{C\eta_k^2}{2} = - \infty
\end{equation}
This completes the proof. 
\end{proof}

\section{Other Derivations}
\subsection{DAIS Update}\label{app:dais_update}
Here, we derive the DAIS updates for the position and (pre-refreshment) momentum of the parameter particles for the Bayesian linear regression setting. Recall that the update from step $k-1$ to step $k$ takes the general form
\begin{align}
    \params_{k-\frac{1}{2}} &= \params_{k-1} + \frac{\eta}{2}\mass^{-1}\mom_{k-1} \\
    \hat{\mom}_k &= \mom_{k-1} + \eta \nabla \log f_{\beta_k}(\params_{k-\frac{1}{2}}) \\
    \params_{k} &= \params_{k-\frac{1}{2}} + \frac{\eta}{2}\mass^{-1}\hat{\mom}_{k}
\end{align}
Under a geometric annealing scheme, the log annealed unnormalized posterior at step $k$ has the form
\begin{equation}
\begin{aligned}
    \log f_{\beta_k}(\params) &= \log \left( p(\params|\model) p(\data|\params,\model)^{\beta_k} \right) \\
    &= - \frac{1}{2} (\params - \mean_p)^\top \precs_p (\params - \mean_p) - \frac{\beta_k}{2\sigma^2} (\targets - \inputMat \params)^\top (\targets - \inputMat \params) + C,
\end{aligned}
\end{equation}
where $C$ comprises terms constant w.r.t. $\params$. The gradient of this is then
\begin{align}\label{eq:full-batch-grad}
    \nabla_{\params} \log f_{\beta_k}(\params) &= -\precs_p(\params - \mean_p) + \frac{\beta_k}{\sigma^2} \inputMat^\top \left( \targets - \inputMat \params \right).
\end{align}
Substituting, we have
\begin{align}
     \begin{split} 
        \nabla_{\params_{k-\frac{1}{2}}} \log f_{\beta_k}(\params_{k-\frac{1}{2}}) &= -\precs_p \left(\params_{k-1} + \frac{\eta_k}{2} \mom_{k-1} - \mean_p \right) \\ &\qquad + \frac{\beta_k}{\sigma^2} \inputMat^\top \left( y - \inputMat \left(\params_{k-1} + \frac{\eta_k}{2} \mom_{k-1}\right)\right)
    \end{split}
\end{align}
Define the likelihood precision, annealed posterior precision, and annealed posterior mean as
\begin{align}\label{eq:precslld}
    \precs_\text{lld} &= \frac{1}{\sigma^2} \inputMat^\top \inputMat \\
    \precs_\text{pos}^{\beta_k} &= \precs_p + \beta_k \precs_\text{lld} = \precs_p + \frac{\beta_k}{\sigma^2} \inputMat^\top \inputMat \\
    \mean_\text{pos}^{\beta_k} &= \left(\precs_\text{pos}^{\beta_k}\right)^{-1} \left( \precs_p \mean_p + \beta_k \precs_\text{lld} \left(\inputMat^\top\inputMat\right)^{-1}\inputMat^\top \targets\right).
\end{align}
For the updated pre-refreshment momentum, we have
\begin{equation}
\begin{aligned}
    \hat{\mom}_k &= \mom_{k-1} + \eta_k \nabla_{\params_{k-\frac{1}{2}}} \log f_{\beta_k}(\params_{k-\frac{1}{2}}) \\
        &= \mom_{k-1} - \eta_k \precs_p \params_{k-1} - \frac{\eta_k^2}{2} \precs_p \mom_{k-1} + \eta_k \precs_p \mean_p \\
        &\qquad + \frac{\beta_k \eta_k}{\sigma^2} \inputMat^\top \targets - \frac{\beta_k \eta_k}{\sigma^2} \inputMat^\top \inputMat \params_{k-1} + \frac{\beta_k \eta_k^2}{2 \sigma^2} \inputMat^\top \inputMat \mom_{k-1} \\
        &= \left(\iden - \frac{\eta_k^2}{2} \left( \precs_p + \frac{\beta_k}{\sigma^2} \inputMat^\top \inputMat\right)\right)\mom_{k-1} - \eta_k \left(\precs_p + \frac{\beta_k}{\sigma^2} \inputMat^\top \inputMat \right) \params_{k-1} \\
        &\qquad + \eta_k \left( \precs_p \mean_p + \frac{\beta_k}{\sigma^2} \inputMat^\top \targets \right) \\
    & = \left(\iden - \frac{\eta_k^2}{2} \precs_\text{pos}^{\beta_k} \right) \mom_{k-1} - \eta_k \precs_\text{pos}^{\beta_k} \params_{k-1} + \eta_k \precs_\text{pos}^{\beta_k} \mean_\text{pos}^{\beta_k},
\end{aligned}
\end{equation}
and for the updated position, we have
\begin{equation}
\begin{aligned}
    \params_k &= \params_{k-\frac{1}{2}} + \frac{\eta_k}{2} \hat{\mom}_k \\
    &= \params_{k-1} + \frac{\eta_k}{2}\mom_{k-1} + \frac{\eta_k}{2} \left( \left(\iden - \frac{\eta_k^2}{2} \precs_\text{pos}^{\beta_k} \right) \mom_{k-1} - \eta_k \precs_\text{pos}^{\beta_k} \params_{k-1} + \eta_k \precs_\text{pos}^{\beta_k} \mean_\text{pos}^{\beta_k} \right) \\
    &= \left( \iden - \frac{\eta_k^2}{2} \precs_\text{pos}^{\beta_k} \right) \params_{k-1} + \left( \eta_k \iden - \frac{\eta^3}{4} \precs_\text{pos}^{\beta_k}\right) \mom_{k-1} + \frac{\eta_k^2}{2} \precs_\text{pos}^{\beta_k} \mean_\text{pos}^{\beta_k} .
\end{aligned}
\end{equation}

\subsection{Expected DAIS for Bayesian Linear Regression} \label{app:gap}
In this section, we derive the gap between the exact log marginal likelihood and the lower bound given in expectation by DAIS.

If $\targets$ is distributed as a Gaussian with mean $\inputMat\params$ and covariance $\sigma^2 \iden$, and $q(\params)$ is the density of a random variable with mean $\mean_q$ and covariance $\cov_q$, then the expected log-likelihood is
\begin{equation}\label{eq:expectloglld}
\begin{aligned}
    \expect_{q}\left[\log p(\targets|\inputMat, \params)\right] &= \expect_{q} \left[ -\frac{n}{2} \log (2 \pi \sigma^2) - \frac{1}{2\sigma^2} (\targets-\inputMat\params)^\top (\targets-\inputMat\params) \right] \\
    &= \expect_{q} \left[  -\frac{n}{2} \log (2 \pi \sigma^2) - \frac{1}{2\sigma^2} \left(\targets^\top \targets - 2 \params^\top \inputMat^\top \targets + \params^\top \inputMat^\top \inputMat \params \right) \right] \\
    &= -\frac{n}{2} \log (2 \pi \sigma^2) - \frac{1}{2\sigma^2} \left(\targets^\top \targets - 2 \mean_{q}^\top \inputMat^\top \targets + \trace(\inputMat^\top \inputMat \cov_{q}) + \mean_{q} \inputMat^\top \inputMat \mean_{q} \right) \\
    &= -\frac{n}{2} \log (2 \pi \sigma^2) - \frac{1}{2\sigma^2} \left(\targets - \inputMat \mean_q\right)^\top \left(\targets - \inputMat \mean_q\right) - \frac{1}{2} \trace(\precs_\text{lld} \cov_q) .
\end{aligned}
\end{equation}
If $p(\params)$ is the density of a Gaussian with mean $\mean_p$ and covariance $\cov_p$, and if $q(\params)$ is the density of another random variable with mean $\mean_q$ and covariance $\cov_q$, then
\begin{equation}\label{eq:expectlognormal}
\begin{aligned}
    \expect_{q}\left[\log p(\params)\right] &= \expect_{q} \left[ -\frac{d}{2} \log(2\pi) - \frac{1}{2} \log | \cov_{p} | - \frac{1}{2} ( \params - \mean_p )^\top \precs_p (\params - \mean_p) \right] \\
    &= \expect_{q} \left[ -\frac{d}{2} \log(2\pi) - \frac{1}{2} \log | \cov_{p} | - \frac{1}{2} (\params^\top \precs_p \params - 2 \mean_{p}^\top \precs_p \params + \mean_{p}^\top \precs_p \mean_{p} ) \right] \\
    &= -\frac{d}{2} \log(2\pi) - \frac{1}{2} \log | \cov_{p} | - \frac{1}{2} \left( \trace(\precs_p \cov_{q}) + \mean_q^\top \precs_p \mean_q - 2 \mean_p^\top \precs_p \mean_q + \mean_{p}^\top \precs_p \mean_{p} \right) \\
    &= -\frac{d}{2} \log(2\pi) - \frac{1}{2} \log | \cov_{p} | - \frac{1}{2} \trace(\precs_p \cov_{q}) - \frac{1}{2}(\mean_{q} - \mean_p)^\top \precs_p (\mean_{q} - \mean_p).
\end{aligned}
\end{equation}
Given the above results, we now compute the DAIS lower bound:
\begin{equation}
\begin{aligned}
    \elbo_\text{DAIS} &= \expect_{q_\text{fwd}}\left[ \log f_K(\params_K) - \log p_0(\params_0) + \sum_{k=1}^K \log \frac{\pi(\hat{\mom}_k)}{\pi(\mom_{k-1})}  \right] \\
    &= \expect_{q_\text{fwd}}\left[ \log p(\data | \params_K) + \log p_0(\params_K) - \log p_0(\params_0) + \sum_{k=1}^K \log \frac{\pi(\hat{\mom}_k)}{\pi(\mom_{k-1})}  \right] \\
        &= \left( -\frac{n}{2} \log (2 \pi \sigma^2) - \frac{1}{2\sigma^2} \targets^\top \targets + \frac{1}{\sigma^2} \mean_{K}^\top \inputMat^\top \targets - \frac{1}{2} \trace(\precs_\text{lld} \cov_{K}) - \frac{1}{2} \mean_{K}^\top \precs_\text{lld} \mean_{K} \right) \\
        &\qquad \left( - \frac{1}{2} \trace(\precs_p \cov_{K}) - \frac{1}{2} \mean_K^\top \precs_p \mean_K + \mean_p^\top \precs_p \mean_K - \frac{1}{2} \mean_{p}^\top \precs_p \mean_{p} \right) + \frac{d}{2} + \expect_{q_\text{fwd}}\left[ \sum_{k=1}^K \log \frac{\pi(\hat{\mom}_k)}{\pi(\mom_{k-1})}  \right] \\
        &= -\frac{n}{2} \log (2 \pi \sigma^2) - \frac{1}{2\sigma^2} \targets^\top \targets + \frac{1}{\sigma^2} \mean_{K}^\top \inputMat^\top \targets - \frac{1}{2} \trace(\precs_\text{pos} \cov_{K}) - \frac{1}{2} \mean_{K}^\top \precs_\text{pos} \mean_{K} \\
        &\qquad + \mean_p^\top \precs_p \mean_K - \frac{1}{2} \mean_{p}^\top \precs_p \mean_{p} + \frac{d}{2} + \expect_{q_\text{fwd}}\left[ \sum_{k=1}^K \log \frac{\pi(\hat{\mom}_k)}{\pi(\mom_{k-1})}  \right]
\end{aligned}
\end{equation}
For comparison, we compute the log marginal likelihood by completing the square and recognizing the normalization of a Gaussian density:
\begin{equation}
\begin{aligned}
    \log p(\data) 
    &= \log \int p(\data|\params)p_0(\params) d \params \\
    &= \log \int 
    (2\pi\sigma^2)^{-n/2}\exp\left( -\frac{1}{2}
    (\params-\mean_*)^\top\precs_\text{lld}(\params-\mean_*) + \frac{1}{2}\mean_*^\top\precs_\text{lld}\mean_* - \frac{1}{2\sigma^2}\targets^\top\targets
    \right) \\
    &\qquad (2\pi)^{-d/2}|\cov_p|^{-1/2}\exp\left( -\frac{1}{2}
    (\params-\mean_p)^\top\precs_{p}(\params-\mean_p) 
    \right) d \params \\
    &= \log \int 
    (2\pi\sigma^2)^{-n/2}(2\pi)^{-d/2}|\cov_p|^{-1/2}\exp\left(
    -\frac{1}{2}(\params-\mean_\text{pos})^\top\precs_\text{pos} (\params-\mean_\text{pos}) 
    \right. \\
    &\qquad \left. 
    +\frac{1}{2}\mean_\text{pos}^\top\precs_\text{pos}\mean_\text{pos}
    -\frac{1}{2\sigma^2}\targets^\top\targets
    -\frac{1}{2}\mean_p^\top\precs_p\mean_p
    \right) d \params\\
    &= -\frac{n}{2}\log(2\pi\sigma^2) \cancel{-\frac{d}{2}\log(2\pi)} -\frac{1}{2}\log|\cov_p|
    +\frac{1}{2}\mean_\text{pos}^\top\precs_\text{pos}\mean_\text{pos}
    -\frac{1}{2\sigma^2}\targets^\top\targets
    -\frac{1}{2}\mean_p^\top\precs_p\mean_p \\ 
    &\qquad \underbrace{+ \log \int 
    \exp\left(
    -\frac{1}{2}(\params-\mean_\text{pos})^\top\precs_\text{pos} (\params-\mean_\text{pos}) 
    \right) d \params}_{\cancel{+\frac{d}{2}\log(2\pi)}+\frac{1}{2}\log|\cov_\text{pos}|}\\
    &= -\frac{n}{2}\log(2\pi\sigma^2) +\frac{1}{2}\log\frac{|\cov_\text{pos}|}{|\cov_p|}
    +\frac{1}{2}\mean_\text{pos}^\top\precs_\text{pos}\mean_\text{pos}
    -\frac{1}{2\sigma^2}\targets^\top\targets
    -\frac{1}{2}\mean_p^\top\precs_p\mean_p \\ 
\end{aligned}
\end{equation}

The gap is thus
\begin{equation}
\begin{aligned}
        \log p(\data) - \elbo_\text{DAIS} 
        &= \cancel{-\frac{n}{2}\log(2\pi\sigma^2)} + \frac{1}{2} \log \frac{|\cov_\text{pos}|}{|\cov_p|} + \frac{1}{2} \mean_\text{pos}^\top \precs_\text{pos}\mean_\text{pos} - \cancel{\frac{1}{2\sigma^2} \targets^\top \targets}  \\ 
        &\qquad \cancel{- \frac{1}{2} \mean_p^\top \precs_p \mean_p} + \cancel{\frac{n}{2} \log (2 \pi \sigma^2)} + \cancel{\frac{1}{2\sigma^2} \targets^\top \targets} - \frac{1}{\sigma^2} \mean_{K}^\top \inputMat^\top \targets + \frac{1}{2} \trace(\precs_\text{pos} \cov_{K})  \\
        &\qquad + \frac{1}{2} \mean_{K}^\top \precs_\text{pos} \mean_{K} - \mean_p^\top \precs_p \mean_K + \cancel{\frac{1}{2} \mean_{p}^\top \precs_p \mean_{p}} - \frac{d}{2} - \expect_{q_\text{fwd}}\left[ \sum_{k=1}^K \log \frac{\pi(\hat{\mom}_k)}{\pi(\mom_{k-1})}  \right] \\
        &= \underbrace{\frac{1}{2} \mean_{K}^\top \precs_\text{pos} \mean_{K} - \frac{1}{\sigma^2} \mean_{K}^\top \inputMat^\top \targets - \mean_p^\top \precs_p \mean_K + \frac{1}{2} \mean_\text{pos}^\top \precs_\text{pos}\mean_\text{pos}}
        _{\frac{1}{2} \|\mean_K - \mean_\text{pos}\|_{\precs_\text{pos}}^2}\\
        &\qquad + \frac{1}{2} \trace(\precs_\text{pos} \cov_{K}) - \frac{d}{2} + \frac{1}{2} \log \frac{|\cov_\text{pos}|}{|\cov_p|} - \expect_{q_\text{fwd}}\left[ \sum_{k=1}^K \log \frac{\pi(\hat{\mom}_k)}{\pi(\mom_{k-1})}  \right] \\
\end{aligned}
\end{equation}

\subsection{DAIS Update under Full Momentum Refreshment}\label{app:dais_update_full_refresh}
The following quantities are used in our convergence analysis for DAIS, which assumes full momentum refreshment, i.e. that $\gamma=0$.

For the mean of the updated position, we have
\begin{equation}
\begin{aligned}
    \mean_k
    &= \expect_{q_\text{fwd}}\left[
    \left( \iden - \frac{\eta_k^2}{2} \precs_\text{pos}^{\beta_k} \right) \params_{k-1} + \left( \eta_k \iden - \frac{\eta^3}{4} \precs_\text{pos}^{\beta_k}\right) \mom_{k-1} + \frac{\eta_k^2}{2} \precs_\text{pos}^{\beta_k} \mean_\text{pos}^{\beta_k}
    \right] \\
    &= \left( \iden - \frac{\eta_k^2}{2} \precs_\text{pos}^{\beta_k} \right) \mean_{k-1} + \frac{\eta_k^2}{2} \precs_\text{pos}^{\beta_k} \mean_\text{pos}^{\beta_k} \label{eq:dais_update_mean_params}
\end{aligned}
\end{equation}
as $\expect_{q_\text{fwd}}\left[\mom_{k-1}\right]=\zero$ under full momentum refreshment. 

For the covariance of the updated position, we have
\begin{equation}
    \params_k - \mean_k = \left( \iden - \frac{\eta^2}{2} \precs_\text{pos}^{\beta_k}\right)(\params_{k-1} - \mean_{k-1}) + \left(\eta \iden - \frac{\eta^3}{4} \precs_\text{pos}^{\beta_k}\right) \mom_{k-1}
\end{equation}
and so
\begin{equation}\label{eq:cov-recur}
\begin{aligned}
    \cov_k &= \expect_{q_\text{fwd}}[(\params_k - \mean_k) (\params_k - \mean_k)^\top] \\
        &= \left( \iden - \frac{\eta^2}{2} \precs_\text{pos}^{\beta_k}\right) \cov_{k-1} \left( \iden - \frac{\eta^2}{2} \precs_\text{pos}^{\beta_k}\right)^\top 
        + \left(\eta \iden - \frac{\eta^3}{4} \precs_\text{pos}^{\beta_k}\right) \expect[\mom_{k-1} \mom_{k-1}^\top] \left(\eta \iden - \frac{\eta^3}{4} \precs_\text{pos}^{\beta_k}\right)^\top \\
        &= \left( \iden - \frac{\eta^2}{2} \precs_\text{pos}^{\beta_k}\right) \cov_{k-1} \left( \iden - \frac{\eta^2}{2} \precs_\text{pos}^{\beta_k}\right)^\top 
        + \eta^2 \iden - \frac{\eta^4}{4} \precs_\text{pos}^{\beta_k} - \frac{\eta^4}{4} \precs_\text{pos}^{\beta_k} + \frac{\eta^6}{16}  \left(\precs_\text{pos}^{\beta_k}\right)^2 \\
        &= \left(\iden - \frac{\eta^2}{2} \precs_\text{pos}^{\beta_k} \right) (\cov_{k-1} - \cov_\text{pos}^{\beta_k}) \left(\iden - \frac{\eta^2}{2} \precs_\text{pos}^{\beta_k} \right) + \cov_\text{pos}^{\beta_k} - \frac{\eta^4}{4} \precs_\text{pos}^{\beta_k} + \frac{\eta^6}{16} \left(\precs_\text{pos}^{\beta_k}\right)^2
\end{aligned}
\end{equation}
where we leverage the fact that $\mean_{k-1}$ and $\mom_{k-1}$ are independent and hence uncorrelated under full momentum refreshment, and that $\mom_{k-1} \sim \normal(\zero, \iden)$ so $\expect[\mom_{k-1} \mom_{k-1}^\top] = \iden$.

For the mean of the updated momentum, we have
\begin{equation}
\begin{aligned}
    \mean_k^{\mom}
    = \expect_{q_\text{fwd}}\left[
    \left(\iden - \frac{\eta_k^2}{2} \precs_\text{pos}^{\beta_k} \right) \mom_{k-1} - \eta_k \precs_\text{pos}^{\beta_k} \params_{k-1} + \eta_k \precs_\text{pos}^{\beta_k} \mean_\text{pos}^{\beta_k}
    \right] 
    = \eta_k \precs_\text{pos}^{\beta_k} (\mean_\text{pos}^{\beta_k} - \mean_{k-1}) 
\end{aligned}
\end{equation}
where we again use $\expect_{q_\text{fwd}}\left[\mom_{k-1}\right]=\zero$. 

For the covariance of the updated momentum, we have
\begin{equation}\label{eq:mom-cov-recur}
\begin{aligned}
\cov_k^{\mom}
    &= \expect_{q_\text{fwd}}\left[
        (\hat\mom_k-\mean_k^{\mom})
        (\hat\mom_k-\mean_k^{\mom})^\top
    \right] \\ 
    &= \left(\iden - \frac{\eta_k^2}{2} \precs_\text{pos}^{\beta_k} \right)
    \expect_{q_\text{fwd}} \left[ \mom_{k-1}\mom_{k-1}^\top \right] 
    \left(\iden - \frac{\eta_k^2}{2} \precs_\text{pos}^{\beta_k} \right) 
    + \eta_k^2 \precs_\text{pos}^{\beta_k} \cov_{k-1} \precs_\text{pos}^{\beta_k} \\
    &= \left(\iden - \frac{\eta_k^2}{2} \precs_\text{pos}^{\beta_k} \right)^2 + \eta_k^2 \precs_\text{pos}^{\beta_k} \cov_{k-1} \precs_\text{pos}^{\beta_k} \\
\end{aligned}
\end{equation}
where we again leverage the independence of $\mom_{k-1}$ and $\params_k$, and $\expect[\mom_{k-1}\mom_{k-1}^\top] = \iden$.

\subsection{Noisy Model for Bayesian Linear Regression}\label{app:noise-model}
In this section, we justify the additive noise model we used in analyzing the stochastic version of DAIS. In particular, the mini-batch gradient has the following form:
\begin{equation}
    \tilde{\nabla}_{\params} \log f_{\beta_k}(\params) = -\precs_p(\params - \mean_p) + \frac{\beta_k n}{\sigma^2} \inputs \left( y - \inputs^\top \params \right),
\end{equation}
where $(\inputs, y)$ is one training sample and $n$ is the number of training samples in the dataset. Compared to the full-batch gradient in \eqref{eq:full-batch-grad}, we have
\begin{equation}
    \tilde{\nabla}_{\params} \log f_{\beta_k}(\params) - \nabla_{\params} \log f_{\beta_k}(\params) = \frac{\beta_k}{\sigma^2} \left(n \inputs \inputs^\top - \inputMat^\top \inputMat\right)\params + \frac{\beta_k}{\sigma^2} \left( n \inputs y  - \inputMat^\top \targets\right)
\end{equation}
As long as the problem is not linearly solvable, i.e., $\targets = \inputMat \mean^* + \noise$, we have
\begin{equation}\label{eq:error-decomp}
    \tilde{\nabla}_{\params} \log f_{\beta_k}(\params) - \nabla_{\params} \log f_{\beta_k}(\params) = \frac{\beta_k}{\sigma^2} \left(n \inputs \inputs^\top - \inputMat^\top \inputMat\right)(\params - \mean_*) + \frac{\beta_k}{\sigma^2} \left( n \inputs \varepsilon  - \inputMat^\top \noise\right)
\end{equation}
We typically refer to the first term in \eqref{eq:error-decomp} as multiplicative noise as it depends on $\params - \mean_*$ and the second term as additive noise. %

\section{Implementation Details and Additional Results}
\subsection{Implementation Details for Training Experiments}\label{app:imp-train}
The prior $p(\latent)$ is a $50$-dimensional standard Gaussian distribution. The conditional distributions $p(\inputs_i | \latent)$ are independent Bernoulli, with the decoder parameterized by two hidden layers, each with $200$ tanh units. 
The variational posterior $q(\latent|\inputs)$ is also a $50$-dimensional Gaussian distribution with diagonal covariance, whose mean and variance are both parameterized by two hidden layers with $200$ tanh units. For training, we used Adam~\citep{kingma2015adam} optimizer for 1,500 epochs with initial learning rate $0.001$ and we decay the learning rate by factor $0.8$ every 100 epochs. By default, we use constant step size and partial meomentum refreshment $\gamma = 0.9$ for all iterations. For each setting, we tune step-size $\eta$ by grid search with the search range $\{0.02, 0.04, 0.06, 0.08, 0.10 \}$. 

For DAIS (adapt), we learn the annealing scheme along with VAE parameters. In particular, the annealing scheme is parameterized as $\beta_k = \sum_{i=1}^k p_i$ with $p = \textrm{softmax}(z)$ where $z \in \real^K$ is the trainable parameters. Using this parameterization, we guarantee that $\beta_i \geq 0$, $\beta_K = 1$ and $\beta_{i+1} \geq \beta_i$. To avoid collapse, we add a small amount of entropy regularization on $p$ with a coefficient of $0.01$. 
With a stronger entropy regularizer, the learned annealing scheme would be closer to linear scheme $\beta_k = \frac{k}{K}$.
We note that the performance is insensitive to the entropy regularization coefficient.

\subsection{Implementation Details for Evaluation Experiments}\label{app:imp-eval}

For AIS, we run $10$ leapfrog steps (followed by MH accept-reject steps) for every intermediate distribution to be consistent with ~\citet{wu2016quantitative}. By default, we use linear annealing scheme with $\beta_k = \frac{k}{K}$. For HAIS and DAIS, we use constant step-size and partial momentum refreshment with $\gamma = 0.9$ for simulations. Importantly, we only tune the step-size for $K = 10$ and then we employ the optimal scaling scheme we derived in Theorem~\ref{thm:full-batch} with $c = \frac{1}{4}$. By grid search, we find $\eta = 0.08$ is good overall for all the models with $K = 10$, so we use the step-size $\eta$ defined as $0.08 \times (K / 10)^{-0.25}$ for any run with $K$ intermediate distributions.
For AIS, its step-size is adapted throughout the course of training with a target accept rate of $0.65$ in the MH step. We increase the step-size by multiplying $1.02$ when the accept rate is larger then $0.65$, otherwise we multiply it with factor $0.98$. For all curves, we average over 10 runs.

\section{Notes on Memory-Efficient DAIS}\label{app:memory_efficient}

\begin{minipage}{\textwidth}
\begin{algorithm}[H]
\caption{Reversible DAIS (forward)}

\begin{algorithmic}[0]
    \State \Require seed $s_0$, initial state $\params_0 \sim p_0(\params)$, $\mom_0 \sim \pi \triangleq \normal(\zero, \mass)$
    \For{$k = 1, \dots, K$}
        \State $\params_{k-\frac{1}{2}} \leftarrow \params_{k-1} + \frac{\eta_k}{2}\mass^{-1}\mom_{k-1}$
        \State $\hat{\mom}_k \leftarrow \mom_{k-1} + \eta_k \nabla \log f_{\beta_k}(\params_{k-\frac{1}{2}})$
        \State $\params_{k} \leftarrow \params_{k-\frac{1}{2}} + \frac{\eta_k}{2}\mass^{-1}\hat{\mom}_{k}$
        \State $s_k \leftarrow$ \Call{forward\_seed}{$s_{k-1}$}
        \State $\noise_k \sim \normal(\zero, \mass; s_k)$
        \State $\mom_k \leftarrow \gamma \hat{\mom}_k + \sqrt{1 - \gamma^2}\noise_k$ 
    \EndFor
    \State \Return 
    $\params_K, \mom_K, s_K$
\end{algorithmic}\label{alg:rev_dais_forward}
\end{algorithm}
\end{minipage}

\begin{minipage}{\textwidth}
\begin{algorithm}[H]
\caption{Reversible DAIS (backward)}
\begin{algorithmic}[0]
    \State \Require $s_K, \params_K, \mom_K$
    \For{$k = K, \dots, 1$}
        \State $\noise_k \sim \normal(\zero, \mass; s_k)$
        \State $\hat{\mom}_k \leftarrow \frac{1}{\gamma_k} \left( \mom_k - \sqrt{1-\gamma^2} \noise_k \right)$
        \State $s_{k-1} \leftarrow$ \Call{backward\_seed}{$s_k$} 
        \State $\params_{k-\frac{1}{2}} \leftarrow \params_k - \frac{\eta_k}{2}\mass^{-1}\hat{\mom}_k$
        \State $\mom_{k-1} \leftarrow \hat{\mom}_k - \eta_k \nabla \log f_{\beta_k}(\params_{k-\frac{1}{2}})$
        \State $\params_{k-1} \leftarrow \params_{k-\frac{1}{2}} - \frac{\eta_k}{2}\mass^{-1}\mom_{k-1}$
    \EndFor
    \State \Return $\params_0, \mom_0, s_0$
\end{algorithmic}\label{alg:rev_dais_backward}
\end{algorithm}
\end{minipage}

Algorithms~\ref{alg:rev_dais_forward} and~\ref{alg:rev_dais_backward} detail the simulation of DAIS dynamics in a reversible manner. The momentum refreshment step requires the reversible computation of a seed in order to retrieve noise samples. Reversibility is sufficient to facilitate reverse-mode differentiation through the chain without explicitly storing the DAIS trajectory. Hence, DAIS can be made memory-efficient while retaining differentiability to any of its parameters.

However, as detailed by \citet{maclaurin2015gradient}, the division by $\gamma_k$ for the computation of $\hat{\mom}_k$ in the backward simulation is problematic for finite-precision computation as information is lost with each step. This is combated with Algorithm~3 \citep{maclaurin2015gradient} at the minute cost of $\log_2(1/\gamma)$ bits per parameter per step.

\end{document}